\documentclass{article}

\usepackage[utf8]{inputenc}
\usepackage[T1]{fontenc}
\usepackage{geometry}
\usepackage[svgnames]{xcolor}
\usepackage{tikz}
\usepackage{graphicx}

\usepackage{geometry}
\usepackage{graphicx, url, hyperref, amsmath, amssymb, mathtools, comment, xcolor}
\usepackage{amsthm, float} 
\usepackage[most]{tcolorbox}
\tcbuselibrary{skins, breakable}
\usepackage[shortlabels]{enumitem}
\usepackage{mathscinet} 

\usepackage{booktabs}
\usepackage{tabularx}
\usepackage{authblk}

\usetikzlibrary{shadings}
\tikzset{
  gradarrow/.style 2 args={
    shade,
    shading=axis,          
    shading angle=0,       
    left color=#1!30, 
    right color=#2!30,
    draw=none              
  }
}

\definecolor{StageA}{HTML}{0072B2}
\definecolor{StageB}{HTML}{009E73}
\definecolor{StageC}{HTML}{E69F00}
\definecolor{StageD}{HTML}{D55E00}

\theoremstyle{plain}
\newtheorem{theorem}{Theorem}
\newtheorem{lemma}{Lemma}

\theoremstyle{definition}

\newtheorem{remark}{Remark}[section]
\newtheorem{addendum}{Addendum}[section]

\newcommand{\Aletheia}{\emph{Aletheia}}

\definecolor{boxblue}{RGB}{0, 0, 150}
\definecolor{boxback}{RGB}{245, 245, 255}

\newtcolorbox{problem}[1]{%
    colback=boxback,
    colframe=boxblue,
    fonttitle=\bfseries\large,
    title={#1}, 
    sharp corners,
    enhanced,
    attach boxed title to top left={yshift=-2mm, xshift=2mm},
    boxed title style={colframe=boxblue, colback=boxblue},
    before skip=15pt plus 2pt, 
    after skip=15pt plus 2pt,
    top=10pt, bottom=10pt, left=10pt, right=10pt
}

\newtcolorbox{solution}[1]{%
    colback=white,
    colframe=boxblue,
    fonttitle=\bfseries\large,
    title={#1},
    sharp corners,
    enhanced jigsaw, 
    breakable,       
    attach boxed title to top left={yshift=-2mm, xshift=2mm},
    boxed title style={colframe=boxblue, colback=boxblue},
    before skip=15pt plus 2pt,
    after skip=15pt plus 2pt,
    top=10pt, bottom=10pt, left=10pt, right=10pt
}

\title{Semi-Autonomous Mathematics Discovery with Gemini:\\ A Case Study on the Erd\H{o}s Problems}

\author{
Tony Feng\textsuperscript{$\dagger$*}, 
Trieu Trinh\textsuperscript{*}, 
Garrett Bingham\textsuperscript{*}, 
Jiwon Kang\textsuperscript{$\dagger$}, 
Shengtong Zhang\textsuperscript{$\dagger$}, 
Sang-hyun Kim\textsuperscript{$\dagger$}, 
Kevin Barreto\textsuperscript{$\dagger$}, 
Carl Schildkraut\textsuperscript{$\dagger$}, 
Junehyuk Jung\textsuperscript{$\dagger$}, 
Jaehyeon Seo\textsuperscript{$\dagger$}, 
Carlo Pagano\textsuperscript{$\dagger$}, 
Yuri Chervonyi\textsuperscript{*}, 
Dawsen Hwang\textsuperscript{*}, 
Kaiying Hou\textsuperscript{$\dagger$}, 
Sergei Gukov\textsuperscript{$\dagger$}, 
Cheng-Chiang Tsai\textsuperscript{$\dagger$}, 
Hyunwoo Choi\textsuperscript{$\dagger$}, 
Youngbeom Jin\textsuperscript{$\dagger$}, 
Wei-Yuan Li\textsuperscript{$\dagger$}, 
Hao-An Wu\textsuperscript{$\dagger$}, 
Ruey-An Shiu\textsuperscript{$\dagger$}, 
Yu-Sheng Shih\textsuperscript{$\dagger$},
Quoc V. Le\textsuperscript{$\diamond$}, Thang Luong\textsuperscript{$\diamond$} \\

\small \textsuperscript{$\dagger$}Mathematical contribution, \textsuperscript{*}Engineering contribution, \textsuperscript{$\diamond$}Principal Investigators \newline
}

\date{\today}

\begin{document}

\maketitle

\insert\footins{\noindent\footnotesize 
\textit{Corresponding authors: fengt@berkeley.edu, thangluong@google.com.} \newline
\textit{Affiliations:} \textit{Google DeepMind} (Tony Feng, Trieu Trinh, Garrett Bingham, Junehyuk Jung, Yuri Chervonyi, Dawsen Hwang, Kaiying Hou, Sergei Gukov,  Quoc V. Le, Thang Luong), \textit{UC Berkeley} (Tony Feng), \textit{Seoul National University} (Jiwon Kang, Hyunwoo Choi, Youngbeom Jin), \textit{Stanford University} (Shengtong Zhang, Carl Schildkraut), \textit{Korea Institute for Advanced Study} (Sang-hyun Kim), \textit{University of Cambridge} (Kevin Barreto), \textit{Brown University} (Junehyuk Jung), \textit{Yonsei University} (Jaehyeon Seo), \textit{Concordia University} (Carlo Pagano), \textit{Caltech} (Sergei Gukov), \textit{Academia Sinica} (Cheng-Chiang Tsai), \textit{National Taiwan Unversity} (Wei-Yuan Li, Hao-An Wu, Ruey-An Shiu, Yu-Sheng Shih).}

\begin{abstract}
    We present a case study in semi-autonomous mathematics discovery, using Gemini\footnote{More precisely, the project was conducted using a math research agent built upon Gemini Deep Think, codenamed {\color{blue}\href{https://github.com/google-deepmind/superhuman/tree/main/aletheia}{Aletheia}}, introduced in \cite{aletheia}.} to systematically evaluate 700 conjectures labeled `Open' in Bloom's Erd\H{o}s Problems database. We employ a hybrid methodology: AI-driven natural language verification to narrow the search space, followed by human expert evaluation to gauge correctness and novelty. We address 13 problems that were marked `Open' in the database: 4 through seemingly novel autonomous solutions, and 9 through identification of previous solutions in the existing literature. Our findings suggest that the `Open' status of the problems resolved by our AI agent can be attributed to obscurity rather than difficulty. We also identify and discuss issues that arise in applying AI to math conjectures at scale, highlighting the difficulty of literature identification and the risk of ``subconscious plagiarism'' by AI. We reflect on the takeaways from AI-assisted efforts on the Erd\H{o}s Problems.
\end{abstract}

\tableofcontents

\section{Introduction}
Paul Erdős, among the most prolific mathematicians of the 20th century, left a vast legacy of papers and unsolved conjectures. In 2023, Thomas Bloom launched \href{https://www.erdosproblems.com/}{\ttfamily ErdosProblems.com}, a centralized repository designed to catalog these conjectures and track progress on them. At the time of this writing, the database tracks 1,179 problems, with 483 (41\%) classified as solved.

We stress, however, that the ``Open'' status of a problem in this database does not always reflect the true state of the literature. To quote from \cite{bubeck2025earlyscienceaccelerationexperiments}, ``in practice, a problem being listed as `open' roughly indicates that at least 1 professional mathematician attempted and failed to find a previously published solution by searching the internet.'' This gap became evident in October 2025, when OpenAI announced that GPT-5 identified ten ``Open'' problems on the website that had, in fact, already been resolved in the literature. A sharp increase in attention to Bloom's database followed, leading to further AI-related progress and prompting the recent creation of a community wiki by Terence Tao \cite{tao2026erdosai} to comprehensively track AI-assisted developments on the Erd\H{o}s problems.

This paper represents a case study of applying AI at scale to Bloom's Erd\H{o}s problem database. Large Language Models can easily generate candidate solutions, but the number of experts who can judge the correctness of a solution is relatively small, and even for experts, substantial time is required to carry out such evaluations. In particular, it would have been infeasible for our team to evaluate model outputs on all of the problems marked `Open'. We therefore used AI-based natural language verifiers to narrow the search space to a tractable scale for a small team of human experts. 
See Table~\ref{table:aletheia_erdos_solutions} for a synopsis of our results. A comparison to other known AI-assisted results (at the time of this writing) in Table~\ref{table:LLM_solutions}, and an explanation of the classification is in \S\ref{ssec:results}.

\begin{table}[ht]
  \centering

  \begin{tabularx}{\textwidth}{
    @{}
    >{\raggedright\arraybackslash}p{3cm}
    X
    >{\raggedright\arraybackslash}p{2.8cm}
    @{}
  }
    \toprule
    \textbf{Classification} & \textbf{Description} & \textbf{Instances} \\
    \midrule

    Autonomous Resolution &
    Autonomous novel solution. &
    \mbox{\textbf{652}\textsuperscript{*}}, \mbox{\textbf{1051}} \\
    \addlinespace

    Partial AI Solution &
    Solved some part of a multi-part problem. &
    \mbox{\textbf{654}}, \mbox{\textbf{1040}} \\
    \addlinespace

    Independent Rediscovery &
    Found a correct solution later discovered to exist in the literature. &
    \mbox{\textbf{397}\textsuperscript{*}}, \mbox{\textbf{659}\textsuperscript{*}}, \mbox{\textbf{935}}, \mbox{\textbf{1089}} \\
    \addlinespace

    Literature Identification &
    Identified that the problem was already solved in the literature. &
    \mbox{\textbf{333}\textsuperscript{*}}, \mbox{\textbf{591}\textsuperscript{*}}, \mbox{\textbf{705}}, \newline
    \mbox{\textbf{992}}, \mbox{\textbf{1105}} \\
    \bottomrule
  \end{tabularx}
  \caption{Taxonomy of {\Aletheia} results on Erd\H{o}s problems. \textsuperscript{*}Independently obtained by other parties 
  after our initial evaluations were conducted, but prior to the publishing of this work.}\label{table:aletheia_erdos_solutions}
\end{table}

An alternative approach to the evaluation problem is via formal verification, such as through the Lean language. This has also led to a handful of success cases, but has limitations. First, because only a tiny proportion of the math research literature is formalized in Lean, this significantly restricts the model's toolkit for solving problems. Second, many problem statements in the database are open-ended or susceptible to misinterpretation. An expert is still required to interpret the argument in natural language to determine if a formally verified proof addresses the intended mathematical meaning (see Appendix \ref{app:75} for an interesting case study on this issue). 


\subsection{A semi-autonomous effort based on Gemini Deep Think}\label{ssec:intro-results}
\textbf{Aletheia: a specialized math research agent.} From December 2--9 (2025), we deployed a custom mathematics research agent built upon Gemini Deep Think, internally codenamed {\Aletheia} at Google DeepMind \cite{aletheia}, on the then-700 Erd\H{o}s problems still marked as `Open' in Bloom's database. Crucially, {\Aletheia} includes a (natural language) verifier mechanism\footnote{This is the reason for the name ``Aletheia'', which is a homage to the Greek goddess of Truth.} that helped narrow the pool of problems to examine: from the original 700 problem prompts, 212 responses came back as potentially correct.

\textbf{Human evaluation.} A team of human mathematicians then filtered these responses to eliminate incorrect solutions. Most team members were not experts in the relevant problem domain, so we prioritized narrowing the pool of candidate solutions quickly (possibly at the cost of making noisier judgments) to a manageable scale for our smaller core of domain experts. After this step, there were 27 solution candidates to focus on. Then our internal domain experts vetted those candidates carefully, consulting external experts when correctness was ascertained but novelty was unclear. Our ultimate findings were that 63 solutions were technically correct, but only 13 solutions correctly addressed the \emph{intended} problem statement (either by invoking the literature, or by a novel argument).

\begin{remark} For these ``correct'' solutions, there were sometimes minor inaccuracies or omissions. Because of this, we found informal verification to be slightly subjective, even when performed by human experts.\footnote{The original output on Erdos-1051 from our initial December 2-9 sweep was graded by 5 mathematicians, 4 of which considered it to be already correct (within similar bounds of fixing minor accuracies as in the rest of the paper), while 1 marked it incorrect. The controversial issue was an important deduction, described by the model as ``standard comparison theorems for linear recurrences imply...'', which was viewed by four mathematicians as sufficiently standard to be acceptable, and the fifth as too unrigorous. Agreement was not reached even after internal debate, so when writing this paper we used for Erdos-1051 (only) the output of another model run which had been made for ablation purposes on the subset of meaningfully correct solutions. For transparency, both the original output and later ablation output are included in the raw output log.} 
\end{remark}

The following figure summarizes our process. 

\begin{figure}[ht]
    \includegraphics[width=\linewidth]{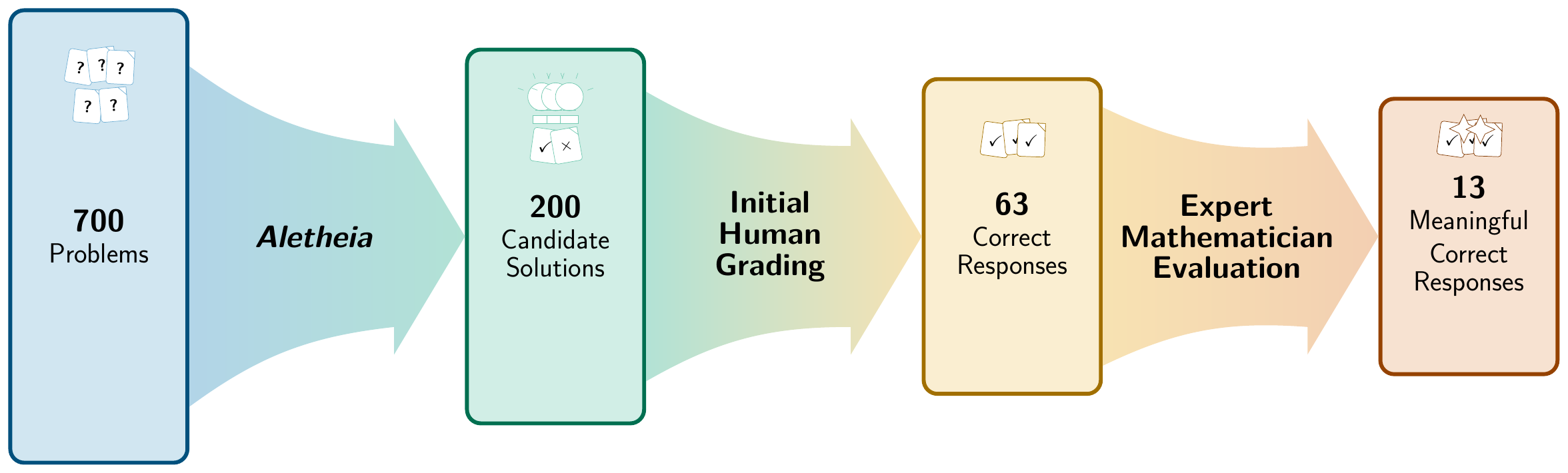}
\end{figure}

The remaining 50 of \Aletheia's correct solutions were technically valid but mathematically vacuous: the problem statements were interpreted in a way that did not capture Erd\H{o}s's intent, often (but not always) leading to trivial solutions. For these problems, our team will propose revised statements. To the wider community interested in Erd\H{o}s problems, we caution that even after correctly solving an Erd\H{o}s problem, one should take care to ensure the statement accurately reflects what Erd\H{o}s likely intended (this issue is discussed further below). 

Finally, 12 of the responses were marked ambiguously, for example due to open-endedness of the question itself.

In summary, out of the 200 solution candidates that we definitively marked correct or incorrect, 137 ($68.5\%$) of responses were fundamentally flawed, while 63 ($31.5\%$) of responses were technically correct, but only 13 ($6.5\%$) were meaningfully correct. 

\begin{table}[h]
\centering
\begin{tabular}{lrr}
\toprule
\textbf{Category} & \textbf{Count} & \textbf{Percentage} \\
\midrule
Fundamentally Flawed & 137 & 68.5\% \\
Technically Correct  & 63  & 31.5\% \\
\hspace{1em} \textit{Meaningfully Correct (subset)} & \textit{13} & \textit{6.5\%} \\
\midrule
\textbf{Total Candidates} & \textbf{200} & \textbf{100.0\%} \\
\bottomrule
\end{tabular}
\caption{Solution accuracy on 200 AI-generated responses, as graded by humans.}
\label{tab:solution-candidates}
\end{table}

\textbf{Transparency.} On the advice of Terence Tao, we emphasized the figures above for transparency. Although this was not the context for Tao's comments, the figures seem relevant to the common assertion that AI is ``accelerating science''. Without commenting on the validity of the claim, we point out that the evidence presented in its favor often has a one-sided nature. In the context of mathematics, the point is often argued by presenting only positive cases, where AI accomplishes a particular task faster than a human would have, and thereby ``accelerates'' that specific result. However, this does not account for negative cases that may involve considerable time spent checking AI-generated material for correctness, nor time spent debugging subtle AI-introduced errors, nor---even in the event of mathematically correct output---time spent searching the literature for possible AI plagiarism. Nevertheless, the authors of this paper are optimistic that the balance of these considerations will trend more positive in time. Our aim is to present a more complete perspective of both the strengths and the weaknesses of AI, so that they may be better addressed. \\

\textbf{New challenges.} Perhaps surprisingly, \textbf{the lengthiest and most arduous step} of our effort was the final one of investigating whether the solutions were already in the literature, and whether they really addressed the \emph{intended} problem. Some question formulations were eventually found to have very subtle issues, tracing back to mistranscriptions or omissions in either the website \emph{or in the original writings of Erd\H{o}s}, rendering the problems too easy. Most of the time, however, it was due to notational/definitional convention ambiguity, as {\Aletheia} had not been informed of the definitional conventions laid out on Bloom's site, and so would commonly confuse different (valid) interpretations of technical terms\footnote{e.g.,~additive versus Dirichlet convolution, strong versus weak completeness, etc.}.

Indeed, the number `13' of meaningfully correct solutions was substantially higher before we undertook this final investigation (and the number `4' of novel autonomous solutions was once as high as `9'), and it fell further after we circulated our solutions privately among external experts\footnote{(Added Feb 5, 2026) and again after public dessemination}; all decreases came from issues of disentangling literature rather than issues of mathematical correctness. \emph{Future AI-based efforts will need to be cautious in this regard.} In Appendix \ref{app:75}, we document the case of Erd\H{o}s-75: {\Aletheia} devised a correct solution to a non-trivial problem; however, after consulting external experts we discovered that the problem as listed on \href{https://www.erdosproblems.com/}{\ttfamily ErdosProblems.com} was \emph{not} the intended formulation; even though it was accurately transcribed from a paper \cite{Erd95} by Erd\H{o}s, \emph{Erd\H{o}s's own formulation was itself flawed} there. \\

 \subsection{Results}\label{ssec:results} Our 13 positive results clustered naturally into four categories which we felt should be distinguished.
\begin{description}
    \item[Autonomous Resolution.] On these problems, {\Aletheia} found the first correct solution (as far as we can tell) in a mathematically substantive way. These include  \textbf{Erd\H{o}s-652} and \textbf{Erd\H{o}s-1051}, although we note that \textbf{Erd\H{o}s-652} is solved by immediate reduction to results from the existing literature.
    \item[Partial AI Solution.] On these problems, there were multiple questions and {\Aletheia} found the first correct solution to one of the questions. These include \textbf{Erd\H{o}s-654}, and \textbf{Erd\H{o}s-1040}. 
    \item[Independent Rediscovery.] On these problems, {\Aletheia} found a correct solution, but human auditors subsequently found an independent solution already in the literature. These include \textbf{Erd\H{o}s-397}, \textbf{Erd\H{o}s-659}, \textbf{Erd\H{o}s-935}, and \textbf{Erd\H{o}s-1089}. They \emph{appear} to have been independently rediscovered by our model: we scanned the logs of \Aletheia's reasoning trace to ensure that the solution was not pulled \emph{directly} from the literature solution. It is of course possible that the solution was \emph{indirectly} ingested from the literature solution, either implicitly through intermediate sources or during training. This highlights a new danger that accompanies AI-generated mathematics: it is susceptible to ``subconscious plagiarism'' by reproducing knowledge acquired during training, without attribution. 
    \item[Literature Identification.] On these problems, {\Aletheia} found that a solution was already explicitly in the literature, despite the problem being marked ``Open'' on Bloom's website at the time of model deployment. These include \textbf{Erd\H{o}s-333}, \textbf{Erd\H{o}s-591}, \textbf{Erd\H{o}s-705}, \textbf{Erd\H{o}s-992}, \textbf{Erd\H{o}s-1105}. 
\end{description}
To be clear, \emph{we make no claims of novelty for the latter two categories}. The `4' autonomous solutions cited above refer to \textbf{Erd\H{o}s-652}, \textbf{Erd\H{o}s-654}, \textbf{Erd\H{o}s-1040}, and \textbf{Erd\H{o}s-1051}. In the estimation of our experts, none of the four individually rises to the level of a research paper. In fact, some of them are at the level of student exercises (given the existing literature). 

We tentatively believe {\Aletheia}'s solution to Erd\H{o}s-1051 represents an early example of an AI system autonomously resolving a slightly non-trivial open Erd\H{o}s problem of somewhat broader (mild)  mathematical interest, for which there exists past literature on closely-related problems \cite{KolouchNovotny2016DiophantineApproximations}, but none fully resolves Erd\H{o}s-1051. Moreover, it does not appear to us that {\Aletheia}'s solution is directly inspired by any previous human argument (unlike in many previously discussed cases), but it does appear to involve a classical idea of moving to the series tail and applying Mahler's criterion. The solution to Erd\H{o}s-1051 was generalized further, in a collaborative effort by {\Aletheia} together with human mathematicians and Gemini Deep Think, to produce the research paper \cite{BKKKZ}.

\subsection{The writing of this paper} The solutions presented in this paper are \emph{human-rewritten} versions of \Aletheia’s raw outputs. While we aimed to preserve the original style and format, we refined the prose to better suit academic standards\footnote{For eample, the model's outputs tended to be far more verbose than one would find in a mathematics journal paper.}, corrected minor inaccuracies, and consolidated references into a unified bibliography. Importantly, the \textbf{core mathematical logic of all the solutions remains unchanged.} 

Where the original outputs contained errors, we have provided remarks to explain those specific inaccuracies. For full transparency, the unedited\footnote{other than formatting them for {\LaTeX} compilation in a unified document, which we did automatically by prompting Gemini 3.0 with the directive, ``Format this text for compilation in a latex doument''.} raw outputs will be uploaded {\color{blue}\href{https://github.com/google-deepmind/superhuman/tree/main/aletheia}{here}}.

Our current belief is that \emph{mathematics papers should always be authored by humans, even when the AI-generated content is (fully) mathematically correct.} As a general principle, authorship in mathematics entails accountability for both mathematical validity and expositional integrity, such as the correctness of attributions, which is a responsibility that only humans can bear. However, this paper is based on a substantial amount of AI-generated mathematical text, and we take accountability for the material presented here.

\subsection{Contextualizing the results}\label{ssec:contextualizing} A disclaimer is necessary regarding the novelty of these results on Erd\H{o}s problems. While we made considerable efforts to review the literature, it is certainly possible that we missed earlier solutions to these problems by human mathematicians. Therefore, our initial classification into categories is, at best, an upper bound on novelty. It is subject to revision after further investigation by the public.\footnote{After the initial posting of this paper, we updated it with ``Addendums'' below problems where appropriate.} Indeed, previous AI-assisted work on Erd\H{o}s problems 1026, 397, 333, and 281 was discovered, after initial announcements of novelty, be redundant with the literature\footnote{For Erd\H{o}s-281, we note that the AI solution is distinct from the previously existing literature solution.}. To the outside observer, this may present a misleading impression of mathematics research: in the authors' experience, it is very unusual for human-generated results to be redundant in this manner (in the modern era of communication). One reason why it seems to be happening so frequently with AI-generated work on Erd\H{o}s problems is that \textbf{the solutions are so simple that they would not attract attention if they originated from humans.} For instance, Erd\H{o}s-1089 is answered by an offhand remark in a 1981 paper \cite{Bannai81}, where the authors seemed unaware that they had resolved an Erd\H{o}s problem. 

In fact, for \emph{all} of the AI-generated solutions which have not yet been located in the literature, we find it plausible that they were also discovered before by humans years ago (perhaps implicitly, as special cases of more general theorems), but were never published because they were not considered important enough. 
	
For this case study, we waited to conduct due diligence and gather a complete picture rather than releasing individual results one-by-one as we confirmed them. During that time, some of the problems were independently solved by other parties. For example, Erd\H{o}s-333 briefly garnered attention on social media for being ``solved'' by GPT-5.2 Pro, but our team quickly corrected this misconception, as {\Aletheia} had already identified that the problem was already solved in the literature. Later, the same (simple) counterexample to Erd\H{o}s-397 that {\Aletheia} found was independently discovered by a combination of GPT-5.2 Pro and Harmonic's \emph{Aristotle}. Even later, Erd\H{o}s-397 was discovered to be a variant of Problem 3 from Day 1 of the 2012 Chinese Team Selection Test for the IMO \cite{AoPS12}. While the date of our solution can be verified by internal logs, we are content to cede priority; indeed, \textbf{our takeaway from this experience is that resolving open Erd\H{o}s problems can be completely elementary}, depending on the problem. We stress that the \emph{mathematical significance of such resolutions can only be accurately evaluated by expert mathematicians}, even if the correctness can be ascertained by non-mathematicians or formal verifiers.

\subsection{Discussion of other AI results on Erd\H{o}s problems}
We briefly survey other autonomously resolved Erd\H{o}s problems. Despite significant social media ``hype'', many of these solutions proved to be derivative upon closer inspection. For example, as explained above, Erdős-397 was found to be nearly identical to a training problem from a Chinese Math Olympiad Team Selection Test (Remark \ref{rem:397}). Another instance of recent AI-adjacent work (though not one where AI played a role in finding\footnote{In either the usual sense of devising the argument, \emph{or} in the sense of locating one in the literature.} the solution) can be found in \cite{alexeev2025forbidden}, which explains that Erd\H{o}s-707 is disproved by an offhand example of Marshall Hall \cite{Hall47}, decades before Erd\H{o}s-707 was even posed.

\begin{table}[H]
    \centering
    \begin{tabularx}{\textwidth}{@{} >{\raggedright\arraybackslash}p{4cm} X @{}}
    \toprule
    \textbf{Classification} & \textbf{Instances} \\ \midrule
    Autonomous Resolution & 205\textsuperscript{*}, 281\textsuperscript{*}, 401\textsuperscript{*}, 543\textsuperscript{*}, \textbf{652}\textsuperscript{*}, 728\textsuperscript{*}, 729\textsuperscript{*},  \textbf{1051} \\ \addlinespace
    Partial AI Solution & \textbf{654}, \textbf{1040} \\ \bottomrule
    \end{tabularx}
    \caption{Taxonomy of all novel autonomous LLM results on Erd\H{o}s problems at the time of this writing. (Independent Rediscovery and Literature Identification not counted.) Numbers in bold were discovered by the effort documented in this paper. \textsuperscript{*}Independently obtained by other parties after our initial evaluations were conducted, but prior to the publishing of this work.}
    \label{table:LLM_solutions}
\end{table}

One of our authors (K.~Barreto) was in part responsible for the results on Erd\H{o}s-205, 401, 728, and 729, and stresses that the arguments closely follow prior human arguments. More specifically, GPT-5.2 Pro's solution to Erd\H{o}s-205 appears similar in spirit to a heuristic argument of Wouter van Doorn (user ``Woett'') on the corresponding site thread, and its solutions to Erd\H{o}s problems 401, 728, and 729 appear inspired by previous work of Pomerance. Indeed, Pomerance later explored these in \cite{Pomerance26}.

\subsection{Conclusions} Our results indicate that there is low-hanging fruit among the Erd\H{o}s problems, and that AI has progressed to be capable of harvesting some of them. While this provides an engaging new type of mathematical benchmark for AI researchers, \textbf{we caution against overexcitement about its mathematical significance}. Any of the open questions answered here, perhaps with the exception of Erd\H{o}s-1051, could have been easily dispatched by the right expert. On the other hand, the time of human experts is limited. AI already exhibits the potential to accelerate attention-bottlenecked aspects of mathematics discovery, at least if its reliability can be improved.

In our case study, we encountered difficulties that were not anticipated at the outset. The vast majority of autonomous solutions that were technically correct came from flawed or misinterpreted problem statements, which occasionally required considerable effort to diagnose. Furthermore, the most challenging step for human experts was not verification, but determining if the solutions already existed in the literature. As AI-generated mathematics grows, the community must remain vigilant of ``subconscious plagiarism'', whereby AI  reproduces knowledge of the literature acquired during training, without proper acknowledgment. Note that formal verification cannot help with any of these difficulties. 

While autonomous efforts on the Erd\H{o}s problems have borne some success, they have also spawned misleading hype and downright misinformation, which have then been amplified on social media platforms---to the detriment of the mathematics community. In addition to the Erd\H{o}s problems, there are many other lists of mathematics conjectures that may become the targets of (semi-)autonomous efforts in the future. We urge such efforts to be attentive to the issues raised here. \\

\noindent \textbf{Acknowledgments.} We thank Thomas Bloom, Gabriel Goldberg, Chris Lambie-Hanson, Vjekoslav Kova\v{c}, Daniel Litt, Insuk Seo, Nat Sothanaphan, Terence Tao, and Wouter van Doorn for help, comments, and advice. \\

\section{Problems autonomously solved by AI}
On these problems, {\Aletheia} found the first correct solution (as far as we can tell). We note, however, that the solution to Erdős-652 is an immediate reduction to the literature.

\subsection{Erdős-652}

\begin{problem}{Erdős-652 \cite{Er97e}}
    Let $x_1,\ldots,x_n\in \mathbb{R}^2$ and let $R(x_i)=\#\{ \lvert x_j-x_i\rvert : j\neq i\}$, where the points are ordered such that\[R(x_1)\leq \cdots \leq R(x_n).\]Let $\alpha_k$ be minimal such that, for all large enough $n$, there exists a set of $n$ points with $R(x_k)<\alpha_kn^{1/2}$. Is it true that $\alpha_k\to \infty$ as $k\to \infty$?
\end{problem}

\begin{remark}
\Aletheia's original output has the correct logical argument, but works with certain incorrect constants throughout (incorrectly cited from \cite{PS98}). More precisely: 
\begin{itemize}
    \item The output cites Theorem~2.1 from ``(Pach--Sharir 1992)'', but no such result exists that we could find. 
    \item A similar theorem is used in the paper \cite{PS98}.
    However, the exponents are different: the model's output says \((2/3,2/3)\), but \cite{PS98} gives \((3/5,4/5)\).
    \item The solution below follows the model output, but with the corrected exponents. 
\end{itemize}
The original output also unnecessarily used $\alpha_{k}+\epsilon$ for any $\epsilon>0$ instead of $\alpha_{k}$ to bound $\frac{R(x_{k})}{n^{1/2}}$ and took $\epsilon\to 0$ which could cause dependency issues since $n$ is dependent on $\epsilon$. This minor issue was also simply fixed in the solution below by using $\alpha_{k}$ itself.

During the writing of this report, a solution by GPT-5.2 Pro was announced \cite{ErdosLLMHunter652}, by similar means of a literature result. However, unlike {\Aletheia}, it instead uses a more recent result of Mathialagan \cite[Theorem 3.6]{mathialagan2019bipartitedistinctdistancesplane}.
\end{remark}
 
\begin{solution}{Solution to {Erdős-652}}
We prove that the conclusion is indeed true. More specifically, we show that we have $\alpha_k = \Omega(k^{1/4})$, which implies the desired result. We begin by citing a core result from the literature, which will be crucial for the proof.

\begin{theorem}(Pach--Sharir \cite[Theorem 1.1]{PS98}) \label{thm : Pach-Sharir incidence bound}
Let $C$ be a set of $n$ simple curves in the plane with the property that
\begin{enumerate}[(i)]
    \item for any $k$ points there are at most $s$ curves of $C$ passing through all of them;
    \item any pair of distinct curves from $C$ intersect in at most $s$ points.
\end{enumerate}
For any set $P$ of $m$ points on the plane, the number of incidences between the points of $P$ and the curves of $C$ is bounded above by
\[c(k,s)\bigl(m^{k/(2k-1)}n^{(2k-2)/(2k-1)}+m+n\bigr),\]
where $c(k,s)$ is a positive constant that depends on $k$ and $s$ but not $m$ or $n$.\\
An incidence is a pair $(p,\gamma)$ with $p\in P$ and $\gamma\in C$.
\end{theorem}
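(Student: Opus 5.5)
The plan is to combine a combinatorial bound that is efficient when curves are few relative to points with a divide-and-conquer step via cuttings, and then choose the cutting parameter to balance the two.

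\textbf{Step 1: a K\H{o}v\'ari--S\'os--Tur\'an bound.} Consider the bipartite incidence graph between $P$ and $C$. By hypothesis (i) it contains no $K_{k,s+1}$ with the $k$ vertices on the point side. For a curve $\gamma$ with $d_\gamma$ incidences, double count pairs ($k$-subset of $P$ on $\gamma$, $\gamma$):
\[
\sum_{\gamma\in C}\binom{d_\gamma}{k}\le s\binom{m}{k}.
\]
Convexity of $x\mapsto\binom{x}{k}$, applied to the curves with $d_\gamma\ge k$, gives
\[
I(P,C)\le c_1(k,s)\bigl(m\,n^{1-1/k}+n\bigr),
\]
where the term $n$ absorbs the curves with $d_\gamma<k$. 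This bound is already the target bound when $n$ is small compared with $m$, so it serves as the base estimate.

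\textbf{Step 2: a cutting.} Pick a random sample $R\subset C$ of about $r$ curves and take the vertical (trapezoidal) decomposition of their arrangement. Hypothesis (ii) gives at most $s\binom{r}{2}$ crossings, so the arrangement has $O_s(r^2)$ cells. The Clarkson--Shor sampling argument, applied to these constant-description trapezoids, then yields a decomposition of the plane into $O(r^2)$ cells, each crossed by at most $n/r$ curves of $C$. To avoid logarithmic losses, resample or refine the heavy cells in the standard way.

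Let cell $\Delta_i$ contain $m_i$ points, so that $\sum m_i\le m$. Apply Step 1 inside each cell, using only the curves that cross $\Delta_i$, and sum over the cells:
\[
\sum_i c_1\Bigl(m_i (n/r)^{1-1/k}+n/r\Bigr)\lesssim m\,n^{1-1/k}r^{-(1-1/k)}+rn .
\]

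\textbf{Step 3: balancing.} Set $r=m^{k/(2k-1)}n^{-1/(2k-1)}$. Both terms then become $m^{k/(2k-1)}n^{(2k-2)/(2k-1)}$. If this choice gives $r<1$, then $m\,n^{1-1/k}$ is small and Step 1 directly gives $O(n)$. If it gives $r>n$, take $r=n$: each cell then meets $O(1)$ curves, and the count is $O(m)$. These two cases produce the $m+n$ terms in the statement.

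\textbf{Main obstacle: the cell boundaries.} The difficulty is the set of incidences not captured inside open cells, namely points lying on the boundaries of the decomposition. These are points on sampled curves, on vertical walls, or at vertices.
\begin{itemize}
\item Vertical walls contain only finitely many points of $P$ in degenerate position, so I would remove them by a generic rotation of the coordinate system.
\item For points lying on sampled curves, I would assign each such point to an adjacent cell and count its incidences with the curves crossing or bounding that cell. Alternatively, I would note that incidences with the sampled curves themselves number at most $r$ times the maximum number of incidences per curve, and bound these by applying Step 1 to $(P,R)$. This gives $O(m r^{1-1/k}+r)$, which is dominated at the chosen $r$.
\end{itemize}
I would try the second route first, and verify the cutting lemma for curves satisfying only (ii) carefully, since that is the one nonelementary geometric input.
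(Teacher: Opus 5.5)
\paragraph{Gap 1: the cutting lemma.} The paper gives no proof of this statement. It quotes Pach--Sharir, whose argument follows Sz\'ekely's crossing-lemma method and uses nothing about the curves beyond (i) and (ii). Your Steps 1 and 3 are fine. The cutting route is a legitimate alternative for well-behaved curves, but its geometric input does not follow from the stated hypotheses. Condition (ii) bounds the crossings among the sampled curves. It does not bound the number of local $x$-extrema of a single curve, and an arbitrary simple curve (a spiral, say) can have infinitely many of them in every direction. So the vertical decomposition of $r$ sampled curves need not have $O(r^2)$ cells, and Clarkson--Shor gives no $(1/r)$-cutting of size $O(r^2)$. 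Step 2 therefore needs an extra hypothesis, such as each curve splitting into $O(1)$ $x$-monotone arcs. That is harmless for circles, the only case the paper uses, but it does not prove the theorem as stated.

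\paragraph{Gap 2: the boundary step.} Even for circles, the boundary step fails as you plan it. At $r=m^{k/(2k-1)}n^{-1/(2k-1)}$, your bound for incidences on sampled curves satisfies
\[
\frac{m\,r^{1-1/k}}{m^{k/(2k-1)}n^{(2k-2)/(2k-1)}}=\Bigl(\frac{m^{2}}{n^{(2k+1)/k}}\Bigr)^{(k-1)/(2k-1)}.
\]
This ratio is unbounded in the range $n^{1+1/(2k)}\ll m\le n^2$, where the first term of the target already dominates $m+n$. For circles with $m=n^{3/2}$, for example, $r=n^{7/10}$ and $mr^{2/3}=n^{59/30}$, against a target of $n^{17/10}$.

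Your first route, as stated, loses incidences. A curve through a point $p$ of a sampled curve $\gamma_0$ may enter only the cell on the other side of $\gamma_0$. And if many sampled curves pass through $p$, any single adjacent cell sees only $O(1)$ of them.

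A version that works has two parts. First, put each point lying on exactly one sampled curve into all of the $O(1)$ cells adjacent to it. Every other curve through such a point enters one of those cells, so this costs only a constant factor in $\sum_i m_i$, plus at most $m$ incidences with the sampled curves themselves. Second, handle the at most $s\binom r2$ points lying on two or more sampled curves by the dual count $\sum_p\binom{d_p}{2}\le s\binom n2$, which comes from (ii). Applied to these points, it gives $O(r^2+rn)$ incidences.

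The same dual count settles the case $m>n^2$. There, ``take $r=n$'' does not work: no curve then crosses any cell, so every incidence lies on a boundary. Instead, directly, $I\le m+2s\binom n2=O(m)$.
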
 
\textbf{1. Construction of set of points and family of circles} \\
Let $k$ be a fixed positive integer. By the definition of $\alpha_k$, there exists an integer $N_0$ such that for all $n \geq N_0$, there exists a set $P_n = \{x_1, \ldots, x_n\} \subset \mathbb{R}^2$ satisfying:
\[ R(x_k) < \alpha_k n^{1/2}. \]
Since the points are ordered by non-decreasing distinct distance counts $R(x_i)$, it follows that for all $i \in \{1, \ldots, k\}$:
\[ R(x_i) \leq R(x_k) < \alpha_k n^{1/2}. \]
We let  $S = \{x_1, \ldots, x_k\}$ be the subset of the first $k$ points of $P_n$. For each $x_i \in S$, let $D_i$ denote the set of distinct distances from $x_i$ to the other points in $P_n$. That is, $D_i = \{ |p - x_i| : p \in P_n \setminus \{x_i\} \}$. We have $|D_i| = R(x_i) < \alpha_{k} n^{1/2}$.

We construct a family of circles $\mathcal{C}$ defined by these distances:
\[ \mathcal{C} = \bigcup_{i=1}^k \{ \Gamma(x_i, r) : r \in D_i \}, \]
where $\Gamma(c, r)$ is the circle centered at $c$ with radius $r$. Since the circles in $\mathcal{C}$ are all distinct we have
\[ |\mathcal{C}| = \sum_{i=1}^k |D_i| < k \alpha_{k} n^{1/2}. \]

\textbf{2. Lower bound for the number of incidences between $P_n$ and $\mathcal{C}$}\\
We now derive a lower bound for the number of incidences between $P_n$ and $\mathcal{C}$. Consider any point $p \in P_n \setminus S$. For every center $x_i \in S$, the distance $r = |p - x_i|$ is in $D_i$. Thus, $p$ lies on the circle $\Gamma(x_i, r) \in \mathcal{C}$.
Since there are $k$ distinct centers in $S$, every point $p \in P_n \setminus S$ is incident to at least $k$ circles in $\mathcal{C}$.
Consequently, the total number of incidences between $P_n$ and $\mathcal{C}$ is bounded from below by $(n-k)k$.\\

\textbf{3. Application of the known upper bound}\\
We now apply Theorem \ref{thm : Pach-Sharir incidence bound} to $P_n$ and $\mathcal{C}$. Note that since all curves in $\mathcal{C}$ are circles, we may apply Theorem \ref{thm : Pach-Sharir incidence bound} with $k=3$ and $s=2$. Together with the lower bound of the number of incidences obtained above, this implies 
\[(n-k)k\leq c(3,2)\left(n^{3/5}|\mathcal{C}|^{4/5}+n+|\mathcal{C}|\right).\]
Combined with $|\mathcal{C}|<k\alpha_{k}n^{1/2}$ from Part 1 we have
\[(n-k)k\leq C\left(n\alpha_{k}^{4/5}k^{4/5}+n+k\alpha_{k}n^{1/2}\right)\]
for some constant $C>0$ independent of $k$ and $n$. Dividing both sides by $n$ and taking the limit $n\to\infty$ we obtain
\[k\leq C(\alpha_{k}^{4/5}k^{4/5}+1)\]
which implies $\alpha_{k}=\Omega(k^{1/4})$, concluding the proof.
\end{solution}

\subsection{Erdős-1051}

\begin{problem}{Erdős-1051 \cite{ErGr80, Er88}}   
Is it true that if $a_1<a_2<\cdots$ is a sequence of integers with\[\liminf a_n^{1/2^n}>1\]then\[\sum_{n=1}^\infty \frac{1}{a_na_{n+1}}\]is irrational?
\end{problem}

\begin{remark}
The original model output below contains a minor error: it incorrectly takes strict inequalities during the proof of Lemma~\ref{lem:R_n_upper_bound}. 
The solution has been formalised in Lean 4 by Barreto, and a significant generalization is the subject of follow-up work \cite{BKKKZ}, for which {\Aletheia} and Gemini Deep Think also made major contributions  to both the formulation and the proof. 
\end{remark}

\begin{solution}{Solution to Erdős-1051}  We will answer the question affirmatively by proving the following:

\begin{theorem}
    Let $(a_n)$ be a strictly increasing sequence of positive integers satisfying $\liminf_{n\to\infty} a_n^{1/2^n}>1$. Then, the series \[S\coloneq\sum_{n=1}^{\infty}\frac{1}{a_n a_{n+1}}\] is irrational.
\end{theorem}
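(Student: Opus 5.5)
We argue by contradiction: suppose $S=p/q$ with $p,q$ positive integers, and look at the tails
\[R_N \coloneq \sum_{n\ge N}\frac{1}{a_n a_{n+1}}.\]
Then $q\,a_1\cdots a_N\,R_{N+1}$ is a positive integer for every $N$, but we only need the much weaker denominator $a_{N-1}a_N$ coming from the telescoping-like structure. Concretely, $q\,a_{1}a_2\cdots a_N R_{N+1}\in\mathbb{Z}_{>0}$ is too crude. Instead we use that $S-\sum_{n<N}\frac{1}{a_na_{n+1}}$ has denominator dividing $q\cdot\operatorname{lcm}(a_1a_2,\dots,a_{N-1}a_N)$, which again is too large. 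So the plan is to exploit the growth hypothesis in the Mahler-criterion style the paper alludes to: pass to the tail and show it is "too small to be a nonzero rational with the available denominator", choosing $N$ cleverly along a subsequence.

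The key steps are as follows. First, from $\liminf a_n^{1/2^n}>1$ fix $c>1$ with $a_n\ge c^{2^n}$ for large $n$. Second, choose infinitely many indices $N$ at which the sequence grows quickly relative to its past, namely $a_{N+1}\ge a_N^{2-\varepsilon}$-type "jump" indices, or more precisely indices where $a_{N+1}\ge \bigl(\prod_{n\le N}a_n\bigr)^{\delta}$ fails least. The standard dichotomy is this. If $a_{n+1}\le a_n^{2}$-ish for all large $n$ (slow case), the doubly-exponential lower bound forces $a_n^{1/2^n}$ to converge to some $\theta>1$ and the $a_n$ behave like $\theta^{2^n}$. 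If infinitely often $a_{n+1}$ is much larger than $a_n^2$ (fast case), the tail $R_{N}\approx 1/(a_Na_{N+1})$ is far smaller than $1/(q\,\mathrm{denominator})$.

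Third, obtain the upper bound for the tail (the lemma labelled \ref{lem:R_n_upper_bound} in the paper), of the form $R_{N}\le \frac{2}{a_Na_{N+1}}$ for large $N$, using $a_{n+1}\ge 2a_n$ eventually. This doubling holds along most indices from the growth condition; where it fails, we group terms. Fourth, produce the lower bound. We write $D_N=q\prod_{n\le N}a_n$, so that $D_N R_{N+1}$ is a positive integer and $R_{N+1}\ge 1/D_N$. We then seek infinitely many $N$ with $a_{N+1}a_{N+2}> 2q\prod_{n\le N}a_n$. If no such $N$ existed, the quantity $L_N=\log a_N$ would satisfy $L_{N+1}+L_{N+2}\le \sum_{n\le N}L_n+O(1)$ for all large $N$. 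This linear recurrence inequality has dominant root the golden-ratio-like root $<2$ of the relevant characteristic polynomial, whose precise growth rate we would compute. By a comparison argument for linear recurrences, it forces $L_N=O(\rho^N)$ with $\rho<2$, contradicting $L_N\ge 2^N\log c$.

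The main obstacle is this last step. We must pick the right denominator bound and check that the resulting recurrence genuinely has growth rate strictly below $2$. The naive product denominator gives $L_{N+1}+L_{N+2}\le\sum_{n\le N}L_n$, whose growth rate needs verification. If the rate comes out too close to $2$, the denominator must be sharpened, for example by clearing only $\operatorname{lcm}$ of consecutive products, or by working with $a_Na_{N+1}R_{N+1}$ directly. The comparison must also be made rigorous rather than heuristic, by an explicit induction bounding $L_N$ by $C\rho^N$.
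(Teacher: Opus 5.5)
The genuine gap is your third step, the tail bound $R_N\le 2/(a_Na_{N+1})$ for all large $N$. The hypothesis only bounds the $a_n$ from below, so it does not give $a_{n+1}\ge 2a_n$ eventually. Whenever some $a_m$ lies far above $c^{2^m}$, the sequence may continue with a run of consecutive integers. For example, take $a_{4k}=2^{16^k}$ and $a_{4k+i}=2^{16^k}+i$ for $i=1,2,3$. Then $\liminf a_n^{1/2^n}=2^{1/8}>1$, yet $R_{4k}\approx 3/a_{4k}^2>2/(a_{4k}a_{4k+1})$. With runs of growing length (start a run of length $j$ at $\lceil c^{2^{m+j}}\rceil$), the ratio $a_Na_{N+1}R_N$ is even unbounded. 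So no ``grouping of terms'' yields a constant-factor bound, and your selection of $N$ gives no control over whether the chosen $N$ avoid such runs. What does hold unconditionally is the telescoping bound $R_N\le 2/a_{N+1}$. It only yields $a_{N+1}\le 2q\prod_{n\le N}a_n$, a recursion of growth rate exactly $2$, which gives no contradiction. (Separately, $q\,a_1\cdots a_NR_{N+1}$ is not an integer in general, because of the term $1/(a_Na_{N+1})$; the correct statement is $q\,a_1\cdots a_NR_N\in\mathbb{Z}_{\ge 1}$. After this correction, failure of your selection condition reads $T_{N+1}\le T_N+T_{N-1}+O(1)$ with $T_N=\sum_{n\le N}\log a_n$. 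This is a Fibonacci recursion of rate $\varphi<2$, so the step you flag as the main obstacle is actually harmless.)

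The paper gets around this by first extracting regularity from rationality. It combines three facts: $K_n=qP_nR_n\ge 1$; the crude bound $K_{n+1}\le qP_n$, which is just the telescoping bound; and the exact identity $a_{n+1}K_n=qP_{n-1}+K_{n+1}$. Together these give $P_{n+1}\le \tfrac32 qP_n^2$ for every $n$, after shifting so that all terms are $\ge 2$. Hence $\log P_n/2^n$ is essentially nonincreasing and converges, and $a_n^{1/2^n}\to L>1$. Only then are all future terms controlled, which gives $R_n\le 2D^{-3\cdot 2^n}$ for $D<L$ and finally $L^2\ge L^3$. If you want to keep your selection argument instead, you need a tail bound that tolerates runs. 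For large $N$, each tail term is at most $\min\bigl(1/(a_Na_{N+1}),\,c^{-3\cdot 2^k}\bigr)$, so $R_N\le (K+2)/(a_Na_{N+1})$. Here $K=O(\log\log(a_Na_{N+1}))$ counts the $k\ge N$ with $c^{3\cdot 2^k}<a_Na_{N+1}$. The extra factor enters the recursion only as $T_{N+1}\le T_N+T_{N-1}+O(\log\log T_{N+1})$. Its growth rate is still below $2$, contradicting $T_N\ge 2^N\log c$. Either way, an idea beyond what you wrote is needed at exactly this step.
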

\begin{proof}
    First, we shall reorient the problem. Set $\alpha\coloneq\liminf_{n\to\infty} a_n^{1/2^n}>1$, and choose any $c$ with $1<c<\alpha$. By the definition of $\liminf$, there exists $n_*(c)$ such that $a_n^{1/2^n}\geq c$ for $n\geq n_*(c)$. Hence, $a_n\geq c^{2^n}\to\infty$, so there exists $N\geq n_*(c)$ with $a_n\geq 2$ for all $n\geq N$. Fix such an $N$, and write $$S=\sum_{n=1}^{N-1}\frac{1}{a_n a_{n+1}}+\sum_{n=N}^{\infty}\frac{1}{a_n a_{n+1}}\eqcolon S_0+S_{\mathrm{tail}}.$$ Note that $S_0\in\mathbb{Q}$, so $S\in\mathbb{Q}$ if and only if $S_{\mathrm{tail}}\in\mathbb{Q}$. Now define the shifted sequence $b_n\coloneq a_{N+n-1}$ for $n\geq 1$, so that $2\leq b_1<b_2<\dots$ are integers and \[S_{\mathrm{tail}}=\sum_{n=1}^{\infty}\frac{1}{b_n b_{n+1}}\eqcolon S'.\] Moreover, the growth condition is preserved under the shift since letting $x_m\coloneq a_m^{1/2^m}$, \[b_n^{1/2^n}=a_{N+n-1}^{1/2^n}=\left(a_{N+n-1}^{1/2^{N+n-1}}\right)^{2^{N-1}}=x_{N+n-1}^{2^{N-1}},\] we recall that $t\mapsto t^{2^{N-1}}$ is continuous and strictly increasing on $(0,\infty)$, so it follows that \[\liminf_{n\to\infty}b_n^{1/2^n}=\left(\liminf_{m\to\infty}a_m^{1/2^m}\right)^{2^{N-1}}=\alpha^{2^{N-1}}>1.\] Thus, it suffices to prove that $S'\notin\mathbb{Q}$.

    To this end, assume, for contradiction, that $S'=p/q$ with $p\in\mathbb{Z}$, $q\in\mathbb{Z}^+$. Define the partial products and tails \[P_n\coloneq\prod_{k=1}^{n}b_k\qquad R_n\coloneq\sum_{k=n}^{\infty}\frac{1}{b_k b_{k+1}}\] for $n\geq 1$, with $P_0=1$. Furthermore, define the terms $K_n\coloneq q P_n R_n$ for $n\geq 1$. We have:

    \begin{lemma}\label{lem:R_n_lower_bound}
        For all $n\geq 1$, one has $K_n\in\mathbb{Z}_{\geq 1}$, hence \[R_n\geq\frac{1}{q P_n}.\]
    \end{lemma}
    \begin{proof}
        Using $S'=p/q$, we have \[K_n=qP_n\left(\frac{p}{q}-\sum_{k=1}^{n-1}\frac{1}{b_k b_{k+1}}\right)=pP_n-q\sum_{k=1}^{n-1}\frac{P_n}{b_kb_{k+1}}.\] For $k\leq n-1$, the integer $b_k b_{k+1}$ divides $P_n=\prod_{j=1}^{n}b_j$, hence each $\frac{P_n}{b_k b_{k+1}}\in\mathbb{Z}$, so $K_n\in\mathbb{Z}$. Also, $R_n>0$ as all terms are positive, so $K_n=qP_nR_n>0$, hence $K_n\geq 1$. Dividing by $qP_n$ yields $R_n\geq 1/(qP_n)$, as claimed.
    \end{proof}

    \begin{lemma}\label{lem:R_n_upper_bound}
        For all $n\geq 1$, \[R_{n+1}\leq\frac{1}{b_{n+1}},\qquad\text{and consequently,}\qquad K_{n+1}\leq qP_n.\]
    \end{lemma}
    \begin{proof}
        Since the $b_k$ are strictly increasing integers, $b_{k+1}-b_k\geq 1$, so \[\frac{1}{b_k b_{k+1}}\leq\frac{b_{k+1}-b_k}{b_k b_{k+1}}=\frac{1}{b_k}-\frac{1}{b_{k+1}}.\] Summing from $k=n+1$ to $m$ gives a telescoping sum \[\sum_{k=n+1}^{m}\frac{1}{b_kb_{k+1}}\leq\sum_{k=n+1}^{m}\left(\frac{1}{b_k}-\frac{1}{b_{k+1}}\right)=\frac{1}{b_{n+1}}-\frac{1}{b_{m+1}}\leq\frac{1}{b_{n+1}}.\] Letting $m\to\infty$ gives $R_{n+1}\leq 1/b_{n+1}$. Multiplying by $qP_{n+1}=qP_n b_{n+1}$ yields \[K_{n+1}=qP_{n+1}R_{n+1}\leq qP_n,\] as claimed.
    \end{proof}
    Using $R_n=\frac{1}{b_n b_{n+1}}+R_{n+1}$ and multiplying by $qP_{n+1}$, we obtain the exact identity
    \begin{equation}
        b_{n+1}K_n=qP_{n-1}+K_{n+1}\qquad (n\geq 1).
    \end{equation}
    By \textbf{Lemma \ref{lem:R_n_upper_bound}} and $K_n\geq 1$, it follows that \[b_{n+1}K_n=qP_{n-1}+K_{n+1}\leq qP_{n-1}+qP_n\implies b_{n+1}\leq q(P_{n-1}+P_n).\] Since $b_n\geq 2$ for all $n$, we have $P_n=b_n P_{n-1}\geq 2P_{n-1}$, i.e.~$P_{n-1}\leq\frac{1}{2}P_n$. Hence, \[b_{n+1}\leq q\left(\frac{1}{2}P_n+P_n\right)=\frac{3}{2}q P_n.\] Multiplying by $P_n$ gives \begin{equation}\label{P_n_ineq}
        P_{n+1}=b_{n+1}P_n\leq CP_n^2,\qquad C\coloneq\frac{3}{2}q.
    \end{equation}

    Now define \[u_n\coloneq\frac{\log P_n}{2^n}\qquad(n\geq 1).\] Taking logarithms in (\ref{P_n_ineq}) yields $\log P_{n+1}\leq 2\log P_n+\log C$, so \begin{equation}\label{u_n_ineq}
        u_{n+1}\leq u_n+\frac{\log C}{2^{n+1}}.
    \end{equation} Set \[v_n\coloneq u_n+\frac{\log C}{2^n}.\] Then, (\ref{u_n_ineq}) implies $v_{n+1}\leq v_n$. Also, $P_n\geq 1$ implies $u_n\geq 0$, hence $v_n\geq 0$. Therefore, $(v_n)$ converges, and since $\frac{\log C}{2^n}\to 0$, $(u_n)$ converges as well. So, let \[Y\coloneq\lim_{n\to\infty}u_n,\qquad\Pi\coloneq e^Y.\] Equivalently, \begin{equation}\label{P_n_lim_identity}
        \lim_{n\to\infty}P_n^{1/2^n}=\Pi.
    \end{equation} Next, since $b_n=P_n/P_{n-1}$, \[\frac{\log b_n}{2^n}=\frac{\log P_n}{2^n}-\frac{\log P_{n-1}}{2^n}=u_n-\frac{1}{2}u_{n-1}\xrightarrow[n\to \infty]{}Y-\frac{1}{2}Y=\frac{Y}{2}.\] Hence, the limit \[L\coloneq\lim_{n\to\infty} b_n^{1/2^n}=e^{Y/2}=\sqrt{\Pi}\] exists. Since the limit exists, it equals the $\liminf$, so by the starting hypothesis, $\liminf_{n\to\infty} b_n^{1/2^n}>1$, we have $L>1$. In particular, $\Pi=L^2>1$.

    Now fix any real $D$ with $1<D<L$. Since $b_n^{1/2^n}\to L$, $\exists n_1(D)\in\mathbb{Z}$ such that for $k\geq n_1(D)$, \[b_k^{1/2^k}\geq D\iff b_k\geq D^{2^k}.\] Thus, for $k\geq n_1(D)$, \[b_kb_{k+1}\geq D^{2^k}D^{2^{k+1}}=D^{3\cdot 2^k},\] and therefore, for all $n\geq n_1(D)$, \begin{equation}\label{R_n_sum_ineq}
        R_n=\sum_{k=n}^{\infty}\frac{1}{b_kb_{k+1}}\leq\sum_{k=n}^{\infty}D^{-3\cdot 2^k}.
    \end{equation} Now let $t_k\coloneq D^{-3\cdot 2^k}$, then $t_{k+1}=t_k^2$, and $t_k\to 0^+$. Choose $n_2(D)$ such that $t_{n_2(D)}\leq 1/2$, then for $n\geq n_2(D)$, we have $t_k\leq 1/2$ for all $k\geq n$, meaning \[t_{k+1}=t_k^2\leq\frac{1}{2}t_k\qquad (k\geq n),\] and so inductively, we obtain \[\sum_{k=n}^{\infty}t_k\leq t_n\sum_{j=0}^{\infty}\left(\frac{1}{2}\right)^j=2t_n.\] Combining with (\ref{R_n_sum_ineq}), we obtain for all $n\geq n_0(D)\coloneq\max\{n_1(D),n_2(D)\}$ that $R_n\leq 2D^{-3\cdot 2^n}$. Now, \textbf{Lemma \ref{lem:R_n_lower_bound}} gives $R_n\geq 1/(qP_n)$ for all $n$, hence for $n\geq n_0(D)$, \[\frac{1}{qP_n}\leq R_n\leq 2D^{-3\cdot 2^n}\implies P_n\geq\frac{1}{2q}D^{3\cdot 2^n}.\] Taking $2^{-n}$-th powers gives \[P_n^{1/2^n}\geq(2q)^{-1/2^n}D^3.\] Letting $n\to\infty$ and using (\ref{P_n_lim_identity}) yields $\Pi\geq D^3$. Since $D$ was arbitrary in $(1,L)$, letting $D\to L^-$ gives $L^2=\Pi\geq L^3$. Since $L>0$, dividing by $L^2$ yields $1\geq L$, contradicting our previously established inequality $L>1$.
    
    Hence, $S'\notin\mathbb{Q}$, meaning $S\notin\mathbb{Q}$, as was to be shown.
\end{proof}
\end{solution}

\section{Problems with parts solved by AI} On these problems, there were multiple questions and {\Aletheia} found the first correct solution to one of the questions.
\subsection{Erdős-654}

\begin{problem}{Erdős-654 \cite{Er97e}}
Let $x_1,\ldots,x_n\in \mathbb{R}^2$ with no four points on a circle. Must there exist some $x_i$ with at least $(1-o(1))n$ distinct distances to other $x_i$?
\end{problem}

\begin{remark}
{\Aletheia} answers the question negatively, with an elementary counterexample consisting of points on $x$-and also on $y$-axes. In \cite[p. 530]{Er97e} where this problem is formulated, Erd\H{o}s does acknowledge that it is `perhaps too optimistic'.

It is pointed out on the corresponding problem page in Bloom's database that in \cite{Er87b,ErPa90}, a weaker question was posed under the additional assumption that no three points are on a line. (In fact, {\Aletheia} also highlighted the additional hypothesis, and gave an answer to that case, which however was incorrect, so we omit that part here.) For this reason, we consider our solution to be only addressing a `part' of the question, even though it fully addresses the formulation on \href{https://www.erdosproblems.com/}{\ttfamily ErdosProblems.com} at the time of this writing.
\end{remark}

\begin{solution}{Solution to {Erdős-654}}
Let $S$ be a set of $n$ points in the Euclidean plane $\mathbb{R}^2$. For each point $x \in S$, we denote by $|\mathcal{D}(x)|$ the number of distinct distances from $x$ to $S \setminus \{x\}$.

We prove that the answer to the question is negative. We construct a family of point sets where every point determines at most roughly $\frac{3}{4}n$ distinct distances. Let $m \ge 10$ be an integer, and let $n = 4m$. Define the index set $K = \{10, 11, \dots, m+9\}$.
We define two sets of points $P$ and $Q$ located on the $y$-axis and $x$-axis, respectively:
\begin{align*}
P &= \left\{ (0, y) \in \mathbb{R}^2 \;\middle|\; y \in \{ 3^k, -3^k : k \in K \} \right\},
\\
Q &= \left\{ (x, 0) \in \mathbb{R}^2 \;\middle|\; x \in \{ 2^j, -2^j : j \in K \} \right\}.
\end{align*}
Let $S = P \cup Q$.
Since the $y$-coordinates of $P$ and the $x$-coordinates of $Q$ are non-zero (powers of primes are positive), neither set contains the origin. Thus $P \cap Q = \emptyset$.
The size of each subset is $2|K| = 2m$. The total size of $S$ is $4m = n$.

\begin{lemma}
No four points of $S$ lie on a circle. \end{lemma}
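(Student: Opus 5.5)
The plan is to use the fact that every point of $S$ lies on one of the two coordinate axes, together with the power of the origin with respect to a circle. A circle meets any line in at most two points. So if four points of $S$ lay on a common circle $\Gamma$, then at most two of them could lie on the $x$-axis and at most two on the $y$-axis. Since $S=P\cup Q$ is contained in the union of the two axes, we would need exactly two points $(x_1,0),(x_2,0)\in Q$ with $x_1\neq x_2$, and exactly two points $(0,y_1),(0,y_2)\in P$ with $y_1\neq y_2$, all on $\Gamma$.

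Next I will write $\Gamma$ in the form $x^2+y^2+Dx+Ey+F=0$, which is possible because every circle has such an equation.
\begin{itemize}
\item Setting $y=0$, the $x$-coordinates of the points of $\Gamma$ on the $x$-axis are the roots of $x^2+Dx+F=0$. Since $x_1\neq x_2$ are two such roots, they are exactly the two roots, and Vieta gives $x_1x_2=F$.
\item Setting $x=0$ in the same way gives $y_1y_2=F$.
\end{itemize}
Hence $x_1x_2=y_1y_2$; this is the power of the origin with respect to $\Gamma$.

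It remains to compare absolute values. We have $x_i=\pm 2^{j_i}$ and $y_i=\pm 3^{k_i}$ with $j_i,k_i\in K$, so $j_i,k_i\ge 10$. Taking absolute values in $x_1x_2=y_1y_2$ gives $2^{j_1+j_2}=3^{k_1+k_2}$ with both exponents positive. This contradicts unique factorization, since the left side is even and the right side is odd. Therefore no four points of $S$ are concyclic.

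I do not expect a genuine obstacle. The only step needing care is the first one: I must justify that the four points are split two and two between the axes, using that a circle meets a line in at most two points. I must also make sure the chosen points on each axis are distinct, so that Vieta's formula applies to the two actual roots.
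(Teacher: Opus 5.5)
Your proposal is correct and follows essentially the same route as the paper. Both arguments force a two-and-two split of the four points between the axes, since a circle meets a line at most twice. Both then use the power of the origin to get $x_1x_2=y_1y_2$; you derive this explicitly from the circle equation via Vieta instead of citing the Power of a Point Theorem. Both finish with the unique-factorization contradiction $2^{j_1+j_2}=3^{k_1+k_2}$.
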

\begin{proof}
Indeed, a Euclidean circle intersects a straight line in at most two points. Since $P$ lies on the line $x=0$ and $Q$ lies on the line $y=0$, any circle can contain at most two points from $P$ and at most two points from $Q$.
Consequently, for a circle to contain four points of $S$, it must contain exactly two points from $P$ and exactly two points from $Q$.
Suppose such a circle $\mathcal{C}$ exists. Let the points in $P \cap \mathcal{C}$ be $(0, y_1)$ and $(0, y_2)$, and the points in $Q \cap \mathcal{C}$ be $(x_1, 0)$ and $(x_2, 0)$.
The chords formed by these pairs intersect at the origin $(0,0)$. By the Power of a Point Theorem, the product of the signed lengths of the segments from the intersection point must be equal. In terms of coordinates, this implies:
\[
y_1 y_2 = x_1 x_2.
\]
Taking absolute values yields:
\[
|y_1| |y_2| = |x_1| |x_2|.
\]
By construction, $|y_i| = 3^{k_i}$ and $|x_i| = 2^{j_i}$ for some $k_i, j_i \in K$. Substituting these forms:
\[
3^{k_1} \cdot 3^{k_2} = 2^{j_1} \cdot 2^{j_2} \implies 3^{k_1 + k_2} = 2^{j_1 + j_2}.
\]
By the Fundamental Theorem of Arithmetic, a power of 3 equals a power of 2 if and only if both exponents are zero. However, since $k, j \ge 10$, the sums of exponents satisfy $k_1+k_2 \ge 20$ and $j_1+j_2 \ge 20$. Thus, equality is impossible.
This contradiction implies the claim asserted.
\end{proof}

We establish that every point determines fewer than $\frac{3}{4}n$ distinct distances.

\begin{lemma}
For any $u \in S$, distances from $u$ to points on the same axis are integers, while distances to points on the orthogonal axis are irrational. Thus, the set of distances to $P$ and the set of distances to $Q$ are disjoint.
\end{lemma}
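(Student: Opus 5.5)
We verify the lemma by a direct case analysis on which axis $u$ lies on, and then read off the disjointness. By the symmetry of the construction it suffices to treat $u\in P$, say $u=(0,\pm 3^k)$ with $k\in K$; the case $u\in Q$ is identical with the roles of $2$ and $3$ exchanged.

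\emph{Same axis.} If $v=(0,\pm 3^{k'})\in P$ with $v\neq u$, then $|u-v|$ is the absolute difference of the two $y$-coordinates. This is $|3^k\mp 3^{k'}|$, which is a nonzero integer.

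\emph{Orthogonal axis.} If $v=(\pm 2^j,0)\in Q$, then $|u-v|^2=3^{2k}+2^{2j}$. The plan is to show this integer is not a perfect square, so that $|u-v|$ is irrational. Suppose $3^{2k}+2^{2j}=z^2$. Then
\[(z-3^k)(z+3^k)=2^{2j}.\]
Hence both factors are powers of $2$, say $z-3^k=2^a$ and $z+3^k=2^b$ with $a<b$ and $a+b=2j$. Subtracting gives $2\cdot 3^k=2^b-2^a=2^a(2^{b-a}-1)$. Since $3^k$ is odd, $a=1$, and then $2^{b-1}-1=3^k$.

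We then need that $2^{m}-1=3^k$ has no solution with $k\ge 10$. This is the one step requiring a small number-theoretic argument. If $m$ is odd, then $2^m-1\equiv 1\pmod 3$, so it is not divisible by $3$. If $m=2t$, then $(2^t-1)(2^t+1)=3^k$, so both factors are powers of $3$ differing by $2$. This forces $2^t-1=1$, giving $3^k=3$ and $k=1$, which contradicts $k\ge 10$. Hence $|u-v|^2$ is not a perfect square, and since the square root of a positive integer is either an integer or irrational, $|u-v|$ is irrational.

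Combining the two cases, every distance from $u$ to a point on its own axis is an integer, and every distance to a point on the other axis is irrational. The two distance sets therefore cannot share an element, which gives the disjointness. The only nonroutine step is excluding $3^{2k}+2^{2j}$ being a square, and the factorization above handles it cleanly.
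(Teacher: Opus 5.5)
Your argument is correct, and for $u\in P$ it is exactly the paper's. You factor $z^2-3^{2k}=2^{2j}$ over powers of $2$, get $a=1$, and reduce to $2^m-1=3^k$. You then use mod $3$ and the fact that the only powers of $3$ differing by $2$ are $1$ and $3$.

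The one place you diverge is $u\in Q$, which the paper treats as a separate case. There it factors $z^2-2^{2j}=3^{2k}$ over powers of $3$, obtains $a=0$ and $3^b-1=2^{j+1}$, and needs a mod-$4$ argument instead of mod $3$. So your stated justification, that this case is ``identical with the roles of $2$ and $3$ exchanged,'' is not accurate: the arithmetic is not symmetric in $2$ and $3$.

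The reduction is nevertheless valid, for a simpler reason you should state instead. The distance between $(0,\pm 3^k)$ and $(\pm 2^j,0)$ is the same number $\sqrt{3^{2k}+2^{2j}}$ whichever endpoint is called $u$. Your single computation therefore already shows that every $P$--$Q$ distance is irrational. For $u\in Q$, the same-axis distances are just the integers $|2^j\mp 2^{j'}|$.

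With that fix, your proof is slightly more economical than the paper's. The paper's second case only re-excludes, from the other side, the same exceptional solution: the Pythagorean triple $(3,4,5)$, i.e.\ $k=1$, $j=2$.
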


\begin{proof}
We consider two cases based on the location of $u$.

\paragraph{Case 1: $u \in P$.}
Let $u = (0, Y)$ with $|Y| = 3^{k_0}$ ($k_0 \ge 10$).
Distances to other points $v \in P$ correspond to distances between $(0, Y)$ and $(0, y)$, which are $|Y-y|$. Since coordinates are integers, these distances are integers.
Distances to points $w = (x, 0) \in Q$ are given by $\sqrt{Y^2 + x^2}$.
Suppose such a distance is an integer $z$. Then $z^2 - Y^2 = x^2$, which implies:
\[
z^2 - 3^{2k_0} = 2^{2j}.
\]
Factoring the difference of squares, $(z-3^{k_0})(z+3^{k_0}) = 2^{2j}$. The factors must be powers of 2, say $2^a$ and $2^b$, with $a < b$. The difference between factors is:
\[(z+3^{k_0}) - (z-3^{k_0}) = 2 \cdot 3^{k_0} = 2^b - 2^a = 2^a(2^{b-a} - 1).\]
Since $3^{k_0}$ is odd, comparing the powers of 2 gives $2^a = 2 \implies a=1$.
The remaining factor yields $3^{k_0} = 2^{b-1} - 1$.
Taking modulo 3: $0 \equiv 2^{b-1} - 1 \pmod 3$, implying $2^{b-1} \equiv 1 \pmod 3$. Thus $b-1$ is even; let $b-1 = 2t$.
Then $3^{k_0} = 2^{2t} - 1 = (2^t - 1)(2^t + 1)$.
Since the product is a power of 3, both factors are powers of 3. Their difference is $(2^t+1) - (2^t-1) = 2$.
The only powers of 3 differing by 2 are 1 and 3. Thus $2^t-1 = 1 \implies t=1$.
Substituting $t=1$ gives $3^{k_0} = 3 \implies k_0 = 1$.
However, by construction $k_0 \ge 10$. Thus, no integer solution exists.

\paragraph{Case 2: $u \in Q$.}
Let $u = (X, 0)$ with $|X| = 2^{j_0}$ ($j_0 \ge 10$).
Distances to points in $Q$ are integers $|X-x|$.
Distances to points in $P$ are $\sqrt{X^2 + y^2}$.
Assume an integer distance $z$. Then $z^2 - 2^{2j_0} = 3^{2k}$.
Factorizing: $(z-2^{j_0})(z+2^{j_0}) = 3^{2k}$.
Let factors be $3^a$ and $3^b$ ($a < b$). Difference:
\[
2 \cdot 2^{j_0} = 3^b - 3^a = 3^a(3^{b-a}-1).
\]
Since 2 is coprime to 3, $a=0$.
Then $2^{j_0+1} = 3^b - 1$.
Consider modulo 4: $0 \equiv 3^b - 1 \implies 3^b \equiv 1 \pmod 4$. Thus $b$ is even; let $b=2s$.
Then $2^{j_0+1} = (3^s-1)(3^s+1)$.
The factors are powers of 2 differing by 2.
Thus $3^s-1 = 2 \implies s=1$.
This implies $2^{j_0+1} = 3^2-1 = 8$, so $j_0+1=3 \implies j_0=2$.
But $j_0 \ge 10$. Thus, no integer solution exists.

In both cases, distances to the orthogonal axis are purely irrational, while distances along the resident axis are integers. Thus, the sets are disjoint.
\end{proof}

\begin{theorem}
For every point $u \in S$, $|\mathcal{D}(u)| < \frac{3}{4}n$.
\end{theorem}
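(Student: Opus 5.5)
By the previous lemma, for each $u \in S$ the distances from $u$ to its own axis and to the orthogonal axis are disjoint sets. It therefore suffices to bound the two contributions separately and add them, aiming for
\[
|\mathcal{D}(u)| \le \#\{\text{distances to own axis}\} + \#\{\text{distances to other axis}\}.
\]
The saving comes from the orthogonal axis, where the points occur in symmetric pairs $\pm$.

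\textbf{Orthogonal axis.} Take $u=(0,Y)\in P$. The distance from $u$ to $(x,0)\in Q$ is $\sqrt{Y^2+x^2}$, which depends only on $|x|$. The $2m$ points of $Q$ therefore produce at most $m$ distinct distances, namely one per $j\in K$. The symmetric argument applies to $u=(X,0)\in Q$ and the $2m$ points of $P$, again giving at most $m$ distances.

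\textbf{Own axis.} Again take $u=(0,Y)$. The other $2m-1$ points of $P$ are $(0,y)$ with $y \ne Y$, and their distances are $|Y-y|$. The trivial bound is $2m-1$ distances. Adding the two contributions gives
\[
|\mathcal{D}(u)| \le (2m-1)+m = 3m-1 < 3m = \tfrac34 n,
\]
using $n=4m$. Hence no coincidences need to be exhibited on the resident axis; the count $2m-1$ is already enough. The case $u\in Q$ is identical.

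\textbf{Main check.} The only point needing care is that the orthogonal-axis count really is $m$ rather than $2m$, i.e.\ that $x$ and $-x$ give the same distance. This is immediate from $\sqrt{Y^2+x^2}=\sqrt{Y^2+(-x)^2}$. Combined with the lemma guaranteeing the two distance sets are disjoint, so that we may add rather than worry about overlaps, the strict inequality follows.
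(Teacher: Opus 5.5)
Your proposal is correct and follows essentially the same route as the paper. You bound the own-axis distances trivially by $2m-1$, use the $\pm$ symmetry to get only $m$ distances to the orthogonal axis, and conclude $|\mathcal{D}(u)|\le 3m-1<\frac34 n$ (as a minor point, the disjointness lemma is not actually needed for this upper bound, since $|A\cup B|\le |A|+|B|$ always holds; the paper invokes it only to write an equality).
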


\begin{proof}
Assume without loss of generality that $u \in P$.
By the Lemma, $|\mathcal{D}(u)| = |\mathcal{D}_P| + |\mathcal{D}_Q|$.
\begin{enumerate}
    \item \textbf{Bound for $\mathcal{D}_Q$:}
    The distances are $\{\sqrt{Y^2 + 4^j} \mid j \in K\}$. Since the function $f(j) = Y^2 + 4^j$ is strictly monotonic, there are exactly $|K| = m$ distinct values.
    \[ |\mathcal{D}_Q| = m. \]
    \item \textbf{Bound for $\mathcal{D}_P$:}
    Since $u \in P$, there are $|P|-1 = 2m-1$ other points in $P$. The number of distinct distances cannot exceed the number of target points.
    \[ |\mathcal{D}_P| \le 2m-1. \]
\end{enumerate}
Combining these:
\[
|\mathcal{D}(u)| \le m + (2m-1) = 3m - 1.
\]
Recalling $n = 4m$, we have $3m = \frac{3}{4}n$. Thus:
\[
|\mathcal{D}(u)| \le \frac{3}{4}n - 1 < \frac{3}{4}n.
\]
By symmetry, the same bound applies if $u \in Q$.
\end{proof}

Since $\frac{3}{4}n < (1-\epsilon)n$ for any $\epsilon \in (0, 0.25)$, our construction demonstrates that a set with no four points on a circle need not contain any point with $(1-o(1))n$ distinct distances.
\end{solution}

\subsection{Erdős-1040}

\begin{problem}{Erdős-1040 \cite{EHP58}}
    Let $F\subseteq \mathbb{C}$ be a closed infinite set, and let $\mu(F)$ be the infimum of\[\lvert \{ z: \lvert f(z)\rvert < 1\}\rvert,\]as $f$ ranges over all polynomials of the shape $\prod (z-z_i)$ with $z_i\in F$. Is $\mu(F)$ determined by the transfinite diameter of $F$? In particular, is $\mu(F)=0$ whenever the transfinite diameter of $F$ is $\geq 1$?
\end{problem}

\begin{remark} This problem consists of two questions. {\Aletheia} correctly answered the first question with a simple example, which we were not able to locate in the literature. The argument given by {\Aletheia} for the second part was an incorrect reduction to the literature, specifically to a partial result \cite{krishnapur2025areapolynomiallemniscates} resolving the question under additional assumptions on $F$. We have omitted this second part and lightly edited the first part for flow.
\end{remark}

\begin{solution}{Solution to Erdős-1040}

We prove that $\mu(F)$ is not determined by $d_\infty(F)$. To do so, we exhibit two distinct closed infinite sets, $F_1$ and $F_2$, which have identical transfinite diameters ($d_\infty = 0$) but yield markedly different values for $\mu(F)$.

Recall that for any countable compact set $K$, the transfinite diameter (logarithmic capacity) is zero.\footnote{Human annotation: This claim was not justified or cited by the model output. However, it essentially follows from \cite[Corollary 3.2.5]{Ran95} and the equivalence of transfinite diameter and logarithmic capacity. Moreover, it is easy enough to verify that both sets constructed have transfinite diameter zero.}

\medskip
\textbf{1. Construction of $F_1$ (Positive Area)}

Let $F_1 = \{0\} \cup \left\{ \frac{1}{n} : n \in \mathbb{Z}, n \ge 1 \right\}$.
Since $F_1$ is a countable compact subset of the interval $[0, 1]$, $d_\infty(F_1) = 0$.
Let $P(z) = \prod_{j=1}^m (z - x_j)$ be an arbitrary monic polynomial with roots $x_j \in F_1$.
Since $F_1 \subset [0, 1]$, every root satisfies $x_j \in [0, 1]$.
Consider the disk $D = \{ z : |z - 1/2| < 1/2 \}$.
Since $x_j \in [0, 1]$, we have $|x_j - 1/2| \le 1/2$.
Applying the triangle inequality:
\[ |z - x_j| \le |z - 1/2| + |1/2 - x_j| < \frac{1}{2} + \frac{1}{2} = 1\]
for all $z\in D$. Thus, $|P(z)| = \prod_{j=1}^{m} |z - x_j| < 1^m = 1$ for all $z \in D$.
Consequently, the region $\{ z: \lvert P(z)\rvert < 1\}$ contains $D$, so:
\[ \text{Area}(\{z : |P(z)| < 1\}) \ge \text{Area}(D) = \frac{\pi}{4}. \]
Taking the infimum over all $P$, we have $\mu(F_1) \ge \pi/4$.

\medskip
\textbf{2. Construction of $F_2$ (Vanishing Area)}

Let $R > 4$ be a real parameter. Define
\[ F_2 = \{0, R\} \cup \left\{ \frac{1}{n} : n \ge 1 \right\} \cup \left\{ R + \frac{1}{n} : n \ge 1 \right\}. \]
$F_2$ is a countable compact set, so $d_\infty(F_2) = 0$.
Consider the monic quadratic polynomial $Q(z) = z(z-R)$. Its roots $0, R$ lie in $F_2$.
We compute the area of the lemniscate $L = \{z : |z(z-R)| < 1\}$.
Let $w = z(z-R)$. By completing the square, $w = (z - R/2)^2 - R^2/4$.
Substitute $u = (z - R/2)^2$. The condition $|w| < 1$ transforms to $|u - R^2/4| < 1$.
Solving for $z$: $z = R/2 \pm \sqrt{u}$.
The Jacobian is $|dz/du|^2 = |1/(2\sqrt{u})|^2 = 1/(4|u|)$.
Summing the contributions from both branches of the square root:
\[ \text{Area}(L) = \int_{|u - R^2/4| < 1} \frac{1}{2|u|} dA_u, \]
where $A_u$ denotes the standard area measure on the complex plane.
In the domain of integration we have $|u| \ge R^2/4 - 1$. Thus, the integrand is bounded by $\frac{1}{2(R^2/4 - 1)}$. 
We conclude
\[ \text{Area}(L) \le \pi \cdot \frac{1}{2(R^2/4 - 1)} = \frac{2\pi}{R^2 - 4}. \]
As we increase $R$, this area tends to zero. By choosing $R$ sufficiently large, we ensure $\mu(F_2) < \pi/4$.

\medskip
\textbf{3. Conclusion}

Since $F_1$ and $F_2$ both possess transfinite diameter zero, yet $\mu(F_1) \ge \pi/4$ while $\mu(F_2)$ can be arbitrarily close to zero, $\mu(F)$ is not determined by $d_\infty(F)$.
\end{solution}

\section{Independent rediscovery} On these problems, {\Aletheia} found a correct solution, but human auditors subsequently found an independent solution already in the literature. While \textbf{all credit should certainly be entirely due to the original authors}, we distinguish these solutions from pure literature identification (\S \ref{sec:literature-identification}) because they seem to have stronger implications for {\Aletheia}'s capabilities. However, we reiterate that although we scanned the logs of \Aletheia's reasoning trace to ensure that the solution was not pulled \emph{directly} from the literature solution, there could have been leakage from the pretraining and post-training phases that we would not be able to detect. This risk of ``AI plagiarism'' presents significant concerns for the usage of AI in academic research more broadly.

\subsection{Erdős-397}

\begin{problem}{Erdős-397 \cite{ErGr80}}
    Are there only finitely many solutions to\[\prod_i \binom{2m_i}{m_i}=\prod_j \binom{2n_j}{n_j}\]with the $m_i,n_j$ distinct?
\end{problem}

\begin{remark}\label{rem:397}
    The following solution has been edited from the raw model output mainly for length. The original model output contained a very detailed computational verification of the relevant identity, which we have shortened. All mathematical content present in the following solution was present in the raw model output. After we originally generated this solution, it was found independently using a combination of GPT-5.2 Pro and Aristotle: see \cite{tao2026erdosai}. Later, it was found to be essentially identical to a question used in a Team Selection Test for the Chinese IMO team; see \cite{AoPS12}\footnote{Amusingly, one poster opines there that the problem is ``too easy''.}.
\end{remark}

\begin{solution}{Solution to Erdős-397}
The answer is no; there are infinitely many solutions.

\begin{theorem}
There exist infinitely many distinct pairs of disjoint finite sets of positive integers $(A, B)$ such that
\begin{equation}\label{eq:bin-coef-prod}
\prod_{m \in A} \binom{2m}{m} = \prod_{n \in B} \binom{2n}{n}.
\end{equation}
\end{theorem}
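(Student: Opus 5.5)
The plan is to build the solutions from consecutive ratios of central binomial coefficients, so that the whole problem reduces to a single rational equation that is easy to parametrise. For $m\ge 1$ write
\[
r(m)\coloneq \binom{2m}{m}\Big/\binom{2m-2}{m-1}=\frac{2(2m-1)}{m}=4-\frac{2}{m}.
\]
Suppose $a,b,c,d\ge 2$ are integers with
\[
r(a)\,r(b)=r(c)\,r(d),\qquad\text{equivalently}\qquad \Bigl(2-\tfrac1a\Bigr)\Bigl(2-\tfrac1b\Bigr)=\Bigl(2-\tfrac1c\Bigr)\Bigl(2-\tfrac1d\Bigr). \tag{$*$}
\]
Clearing the ratios then gives
\[
\binom{2a}{a}\binom{2b}{b}\binom{2c-2}{c-1}\binom{2d-2}{d-1}=\binom{2a-2}{a-1}\binom{2b-2}{b-1}\binom{2c}{c}\binom{2d}{d}.
\]
So I would take $A=\{a,b,c-1,d-1\}$ and $B=\{a-1,b-1,c,d\}$. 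This is a valid pair for \eqref{eq:bin-coef-prod} as soon as the eight numbers $a,b,c-1,d-1,a-1,b-1,c,d$ are pairwise distinct. (Index $0$ is harmless, since $\binom{0}{0}=1$, but I would keep all entries $\ge 1$ to match the statement.) Note that $\{a,b\}=\{c,d\}$ must be excluded, since it solves $(*)$ trivially but breaks disjointness.

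The key step is to produce an infinite one-parameter family of solutions to $(*)$. Writing $(*)$ as
\[
(2a-1)(2b-1)\,cd=(2c-1)(2d-1)\,ab,
\]
I would first search small values by hand, or by a short enumeration, to find a few sporadic solutions. I would then look for a polynomial pattern in them, for instance $b$ linear or quadratic in $a$ with $c,d$ polynomials in $a$. A natural first attempt is to impose a cancellation, such as $c=2a-1$, so that the factor $2a-1$ on the left cancels. This leaves a two-variable relation of the form
\[
(2b-1)(2a-1)d = (4a-3)(2d-1)\,ab ,
\]
which I would try to solve with $b,d$ chosen as multiples of $a$ or of $4a-3$. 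If the ansatz $c=2a-1$ fails, I would instead solve $(*)$ for $b$, namely $\tfrac1b = 2-\tfrac{(2-1/c)(2-1/d)}{2-1/a}$. I would then choose $a,c,d$ as polynomials in a parameter $t$ so that the right-hand side becomes the reciprocal of a polynomial in $t$. Finding a family that keeps $b$ integral is the main obstacle, and it is where I would spend the effort.

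Once a family $(a(t),b(t),c(t),d(t))$ is in hand, two routine checks remain. First, identity $(*)$ is a polynomial identity in $t$. Second, for all sufficiently large $t$ the eight indices are pairwise distinct; this follows by comparing degrees and leading coefficients of the polynomials $a(t),\dots,d(t)$. Since the sets $A(t)$ grow with $t$, the pairs $(A(t),B(t))$ are pairwise different, which gives infinitely many solutions and answers Erd\H{o}s-397 in the negative.

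If a two-ratio identity $(*)$ cannot be made to work, the fallback is to allow three ratios on each side, $r(a_1)r(a_2)r(a_3)=r(c_1)r(c_2)r(c_3)$. This gives much more freedom to arrange integrality, at the cost of more bookkeeping in the distinctness check.
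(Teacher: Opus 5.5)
\textbf{The gap: no family is ever exhibited.} Your reduction is correct, and it is the same mechanism the paper uses: any solution of $(*)$ gives an identity between products of central binomial coefficients via $r(m)=2(2m-1)/m$. But the proposal stops where the proof begins. You never produce an infinite family of solutions of $(*)$, and you say yourself that this is the main obstacle. That family is the entire content of the theorem, so as written nothing has been proved.

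Moreover, the one concrete ansatz you try is mis-computed. Substituting $c=2a-1$ into $(2a-1)(2b-1)cd=(2c-1)(2d-1)ab$ gives $(2a-1)^2(2b-1)d=(4a-3)(2d-1)ab$. The factor $2a-1$ does not cancel; it is squared. The slip matters: with your version, the natural choice $d=2a$ leads to $b(16a^2-24a+7)=2-4a<0$. It would therefore steer you away from exactly the choice that works.

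\textbf{What the correct equation gives.} With $d=2a$, the correct equation reduces to $2(2a-1)^2(2b-1)=\bigl(4(2a-1)^2-1\bigr)b$, i.e.\ $b=2(2a-1)^2$. Setting $a=k$ gives $(a,b,c,d)=(k,\,8k^2-8k+2,\,2k-1,\,2k)$, which is precisely the paper's family.

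Here, however, $d-1=c=2k-1$, so your requirement that the eight indices be pairwise distinct fails. You need the additional easy observation that an index occurring in both sets can simply be cancelled. Doing so leaves the paper's sets $A_k=\{k,2k-2,8k^2-8k+2\}$ and $B_k=\{k-1,2k,8k^2-8k+1\}$. For $k\ge 3$ these are verified by the single identity $(4k-3)(4k-1)=4(2k-1)^2-1$.

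If you prefer to keep eight distinct indices, your framework also works. For example, $(a,b,c,d)=\bigl(a,\tfrac{(a+1)(a+2)(2a-1)}{2},\,a+2,\,\tfrac{a(a+1)}{2}\bigr)$ for $a\ge 4$ satisfies $(*)$ and has eight pairwise distinct indices. Either way, some explicit family has to be written down and checked for the argument to be a proof.
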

\begin{proof}
Let $C_k = \binom{2k}{k}$ denote the $k$-th central binomial coefficient. To prove the existence of infinitely many solutions, we will explicitly construct an infinite family of disjoint sets satisfying the required product identity.

For every integer $k \ge 3$, define the sets $A_k$ and $B_k$ as follows:
\begin{align*}
    A_k &= \{k, \, 2k-2, \, 8k^2-8k+2\}, \\
    B_k &= \{k-1, \, 2k, \, 8k^2-8k+1\}.
\end{align*}
For any $k\geq 3$, the sets $A_k$ and $B_k$ are disjoint, and each consists of three strictly positive integers. Moreover, each $k$ generates a unique solution pair. Thus, it suffices to prove that \eqref{eq:bin-coef-prod} holds.

Recall the ratio formula for consecutive central binomial coefficients:
\[ \frac{C_x}{C_{x-1}} = \frac{(2x)!}{(x!)^2} \cdot \frac{((x-1)!)^2}{(2x-2)!} = \frac{2x(2x-1)}{x^2} = \frac{2(2x-1)}{x}. \]
Substituting $x=k$, $x=2k$, $x=2k-1$, and $x=8k^2-8k+2$ into the ratio formula, we thus find
\begin{align*}
\frac{\prod_{m\in A_k}C_m}{\prod_{n\in B_k}C_n}
&=\frac{C_k}{C_{k-1}}\cdot\frac{C_{2k-2}}{C_{2k}}\cdot\frac{C_{8k^2-8k+2}}{C_{8k^2-8k+1}}\\
&=\frac{2(2k-1)}k\cdot\frac{2k}{2(4k-1)}\cdot\frac{2k-1}{2(4k-3)}\cdot\frac{2(16k^2-16k+3)}{8k^2-8k+2}\\
&=\frac{2(2k-1)}k\cdot\frac{2k}{2(4k-1)}\cdot\frac{2k-1}{2(4k-3)}\cdot\frac{2(4k-1)(4k-3)}{2(2k-1)^2}\\
&=1,
\end{align*}
as desired.
\end{proof}

\end{solution}

\subsection{Erdős-935}

\begin{problem}{Erdős-935 \cite{Er76d}}
    For any integer $n=\prod p^{k_p}$, let $Q_2(n)$ be the powerful part of $n$, so that\[Q_2(n) = \prod_{\substack{p\\ k_p\geq 2}}p^{k_p}.\]Is it true that, for every $\epsilon>0$ and $\ell\geq 1$, if $n$ is sufficiently large then\[Q_2(n(n+1)\cdots(n+\ell))<n^{2+\epsilon}?\]If $\ell\geq 2$ then is\[\limsup_{n\to \infty}\frac{Q_2(n(n+1)\cdots(n+\ell))}{n^2}\]infinite? If $\ell\geq 2$ then is\[\lim_{n\to \infty}\frac{Q_2(n(n+1)\cdots(n+\ell))}{n^{\ell+1}}=0?\]
\end{problem}
\begin{remark}
The problem consists of three questions, the second of which was solved by {\Aletheia}. While the original solution (for this part) was mathematically correct, it was edited in order to address the following aesthetic details:
\begin{itemize}
    \item The original output of {\Aletheia} elaborates in detail about why it suffices to show that $Q_{2}(n_{k}+2)$ is unbounded, analyzing each term of $n_{k},n_{k}+1$ and $n_{k}+2$, while the fact seemed straightforward after the construction $n_{k}=8y_{k}^2$.
    \item The notation for equality in $\mathbb F_{p^2}$ was changed from $=$ to $\equiv$, since elements from $\bar{\mathbb Q}$ and $\mathbb F_{p^2}$ are frequently referred to by the same notation. 
\end{itemize}
The argument is certainly well-known to experts, and is at the level of a graduate student exercise. The observation that there are infinitely many consecutive squarefull numbers $n,n+1$ via solutions to the Pell equation is standard, see for example \cite{reuss2014pairskfreenumbersconsecutive} for a more recent quantitative version of this, as well as Mahler's observation listed on the entry for Erdős-364 \cite{BloomErdos364}. From there, one only needs to gain the square of an increasingly large prime in $n+2$, which also follows from basic properties of the Pell equation. 

We also remark that it is well-known that the ABC Conjecture implies that for each $\ell$ there are only finitely many choices of $n$ such that the list $n,\,\ldots,\,n+\ell$ contains $3$ squarefull numbers. Reasoning along similar lines, one can in fact show that the ABC Conjecture implies that the third question in the problem statement has a positive answer.

\end{remark}

\begin{solution}{Solution to the second question in Erdős-935}
We start by noting that for $\ell\geq 2$ the product $P(n, 2) = n(n+1)(n+2)$ is a divisor of $P(n, \ell)$. By the property that $A \mid B \implies Q_2(A) \le Q_2(B)$, we have
\[ \frac{Q_2(P(n, \ell))}{n^2} \ge \frac{Q_2(P(n, 2))}{n^2}. \]
and thus proving the divergence for $\ell=2$ is sufficient. We construct a specific sequence of integers $\{n_k\}$ along which the normalized powerful part is unbounded.

Consider the Pell equation
\[
x^2 - 8y^2 = 1.
\]
Defining the sequences $(x_k)_{k \ge 1}$ and $(y_k)_{k \ge 1}$ of positive integers by the relation
\[ x_k + y_k\sqrt{8} = \alpha^k\]
for $\alpha=3+\sqrt{8}$, we have $x_{k}^{2}-8y_{k}^{2}=1$.
Let $n_{k}=8y_{k}^{2}$. Since $n_k = 8y_k^2$ is a powerful number and $n_{k}+1=x_{k}^2$ is a perfect square we have $Q_{2}(n_{k})=n_{k}$ and $Q_{2}(n_{k}+1)=n_{k}+1$. Since $Q_{2}(ab)\geq Q_{2}(a)Q_{2}(b)$ for any positive integers $a,b$ we have 
\begin{align*}Q_{2}(n_{k}(n_{k}+1)(n_{k}+2))&\geq Q_{2}(n_{k})Q_{2}(n_{k}+1)Q_{2}(n_{k}+2)\\&=n_{k}(n_{k}+1)Q_{2}(n_{k}+2)
\end{align*}
and therefore
\[\frac{Q_{2}(n_{k}(n_{k}+1)(n_{k}+2))}{n_{k}^2}>Q_{2}(n_{k}+2).\]
It remains to show that $Q_{2}(n_{k}+2)$ is unbounded.

\begin{lemma} \label{lemma : n_k+2 modulo p^2}
For any prime $p$ with $p \equiv 5 \pmod 8$ there exists a positive integer $k$ such that $n_k + 2 \equiv 0 \pmod{p^2}$.
\end{lemma}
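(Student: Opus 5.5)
We need $8y_k^2 \equiv -2 \pmod{p^2}$, i.e. $y_k^2 \equiv -1/4 \pmod{p^2}$. Since $p\equiv 5\pmod 8$, we have $p\equiv 1\pmod 4$, so $-1$ is a square mod $p$, hence mod $p^2$ by Hensel's lemma. Equivalently, the condition is $x_k^2 = 1+8y_k^2 \equiv -1 \pmod{p^2}$. The plan is to work in the ring $\mathbb{Z}[\sqrt 2]/p^2$. Since $\left(\tfrac{2}{p}\right)=-1$ for $p\equiv 5\pmod 8$, the prime $p$ is inert in $\mathbb{Z}[\sqrt2]$, so $\mathbb{Z}[\sqrt2]/p\cong\mathbb{F}_{p^2}$. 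Write $\alpha = 3+2\sqrt2 = (1+\sqrt2)^2$. Then $\alpha^k = x_k + 2y_k\sqrt2$ has norm $1$, and $\bar\alpha^k=\alpha^{-k}$, so
\[ x_k = \tfrac12(\alpha^k+\alpha^{-k}), \qquad 4\sqrt2\, y_k = \alpha^k-\alpha^{-k}. \]
The target congruence $8y_k^2\equiv -2$ becomes $(\alpha^k-\alpha^{-k})^2 \equiv -8$, i.e. $\alpha^{2k}+\alpha^{-2k} \equiv -6$. It is cleaner to use $x_k$: we need $(\alpha^k+\alpha^{-k})^2\equiv -4$, i.e. $\alpha^{2k}+\alpha^{-2k}\equiv -6 \pmod{p^2}$, which is the same condition. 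Setting $\beta=\alpha^{2k}$, this reads $\beta^2+6\beta+1\equiv0$, so $\beta \equiv -3\pm 2\sqrt2 = -\alpha^{\mp1}$. It therefore suffices to find $k$ with $\alpha^{2k}\equiv -\alpha^{-1}$, i.e.
\[ \alpha^{2k+1}\equiv -1 \pmod{p^2\mathbb{Z}[\sqrt2]}. \]

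\textbf{Key step.} Let $G$ be the group of norm-one units in $(\mathbb{Z}[\sqrt2]/p^2)^\times$. Because $p$ is inert, the norm-one subgroup of $\mathbb{F}_{p^2}^\times$ is cyclic of order $p+1$, and its lift to level $p^2$ has order $p(p+1)$. In fact $G$ is cyclic of order $p(p+1)$: it is an extension of a cyclic group of order $p+1$ by one of order $p$, with coprime orders. The element $\alpha$ lies in $G$, so its order $d$ divides $p(p+1)$. Since $-1\in G$ and $G$ is cyclic, we need $-1\in\langle\alpha\rangle$ as an odd power of $\alpha$.

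\textbf{Using the structure $\alpha=\varepsilon^2$.} Take $\varepsilon=1+\sqrt2$, which has norm $-1$. Then $\varepsilon^{p}\equiv\bar\varepsilon = -\varepsilon^{-1}\pmod p$ by Frobenius, so $\varepsilon^{p+1}\equiv -1 \pmod p$. Hence $\alpha^{(p+1)/2}\equiv -1\pmod p$, and $(p+1)/2$ is odd because $p\equiv 5 \pmod 8$ gives $p+1\equiv 6\pmod 8$. Lifting to level $p^2$: let $m=(p+1)/2$, so $\alpha^m\equiv -1 \pmod p$, say $\alpha^m=-1+p\gamma$. Then
\[ \alpha^{mp} = (-1+p\gamma)^p \equiv -1 \pmod{p^2}, \]
by the binomial theorem, since $p$ is odd. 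The exponent $mp$ is odd, so we can write $mp=2k+1$ with $k=(mp-1)/2\ge 1$. This gives $\alpha^{2k+1}\equiv -1 \pmod{p^2}$, and tracing back, $x_k^2\equiv -1$, i.e. $n_k+2 = x_k^2+1\equiv 0 \pmod{p^2}$.

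\textbf{Main obstacle.} The delicate part is the bookkeeping of the equivalences. Reducing the rational congruence to $\alpha^{2k+1}\equiv-1$ must be done in $\mathbb{Z}[\sqrt2]/p^2$, where $2$ is invertible, and it must pass correctly back to a congruence of rational integers. Concretely, $2x_k = \alpha^k+\alpha^{-k}$, and $\alpha^{2k+1}=-1$ gives $\alpha^{k}=-\alpha^{-k-1}$, so
\[ (\alpha^k+\alpha^{-k})^2 = \alpha^{-2k}(\alpha^{2k}+1)^2 = \alpha^{-2k}(1-\alpha^{-1})^2 . \]
I would verify directly that this equals $-4$, using $1-\alpha^{-1}=1-(3-2\sqrt2)=2\sqrt2-2$ together with $\alpha^{-2k}=-\alpha$. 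This gives $-\alpha(2\sqrt2-2)^2 = -(3+2\sqrt2)(12-8\sqrt2) = -(36-32) = -4$. So $x_k^2\equiv -1$ in $\mathbb{Z}[\sqrt2]/p^2$. Since $x_k^2+1\in\mathbb{Z}$ and $\mathbb{Z}[\sqrt2]/p^2$ is free over $\mathbb{Z}/p^2$ with basis $1,\sqrt2$, this is a congruence modulo $p^2$ in $\mathbb{Z}$.
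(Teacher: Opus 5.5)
Your proof is correct and follows essentially the same route as the paper: Frobenius applied to $1+\sqrt2$ gives $\alpha^{(p+1)/2}\equiv -1 \pmod p$ with $m=(p+1)/2$ odd, the binomial theorem lifts this to $\alpha^{mp}\equiv -1 \pmod{p^2}$, and choosing $k$ by parity yields $4x_k^2\equiv -4$. The differences are cosmetic. You take $2k+1=mp$ where the paper takes $2k=mp+1$, which is equivalent because $\alpha^{2k}+\alpha^{-2k}$ is symmetric under $\alpha^{2k}\leftrightarrow\alpha^{-2k}$. Your paragraph on the cyclic structure of the norm-one group $G$ is never used and can be dropped.
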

\begin{proof}
Recall $n_k+2 = x_k^2+1$. We seek $k$ such that $x_k^2 \equiv -1 \pmod{p^2}$.
Since $p \equiv 5 \pmod 8$, we have the Legendre symbol $(2/p) = -1$. Thus $p$ is inert in $\mathbb{Z}[\sqrt{2}]$.
Note $\alpha = 3 + \sqrt{8} = (1+\sqrt{2})^2$.
In $\mathbb{F}_{p^2} \cong \mathbb{Z}[\sqrt{2}]/(p)$, the Frobenius map $\sigma(z) = z^p$ fixes elements of $\mathbb{F}_p$ and sends $\sqrt{2} \to -\sqrt{2}$.
Therefore, writing $\equiv$ for equality in $\mathbb F_{p^2}$,
\[ (1+\sqrt{2})^p \equiv 1 - \sqrt{2}. \]
Then
\[ \alpha^{(p+1)/2} \equiv \left((1+\sqrt{2})^2\right)^{(p+1)/2} \equiv (1+\sqrt{2})^{p+1} \equiv (1+\sqrt{2})(1-\sqrt{2}) \equiv 1 - 2 \equiv -1. \]
Let $m = (p+1)/2$. Note that $m$ is an odd integer since $p \equiv 5 \pmod 8$. We write $\alpha^m = -1 + p\delta$ for some $\delta \in \mathbb{Z}[\sqrt{2}]$.
Raising to the power $p$ (which is odd):
\[ \alpha^{mp} = (-1 + p\delta)^p = \sum_{j=0}^p \binom{p}{j}(-1)^{p-j}(p\delta)^j. \]
Since $p \ge 3$, terms with $j \ge 1$ are divisible by $p^2$. The term for $j=0$ is $(-1)^p = -1$.
Thus, $\alpha^{mp} \equiv -1 \pmod{p^2}$.
Let $M = mp$. Since $m$ and $p$ are odd, $M$ is odd.
Define $k$ by $2k = M+1$. Note that $k$ is an integer.
Then
\[ \alpha^{2k} = \alpha^{M+1} = \alpha^M \cdot \alpha \equiv (-1) \cdot \alpha = -\alpha \pmod{p^2}. \]
Since $\alpha$ is a unit, $\alpha^{-2k} \equiv -\alpha^{-1} \pmod{p^2}$.
From the relation $2x_k = \alpha^k + \alpha^{-k}$, squaring yields:
\[ 4x_k^2 = \alpha^{2k} + \alpha^{-2k} + 2. \]
Substituting the modular expressions:
\[ 4x_k^2 \equiv -\alpha - \alpha^{-1} + 2 \pmod{p^2}. \]
Since $\alpha + \alpha^{-1} = (3+\sqrt{8}) + (3-\sqrt{8}) = 6$, we have:
\[ 4x_k^2 \equiv -6 + 2 = -4 \pmod{p^2}. \]
Since $p$ is odd, 4 is a unit modulo $p^2$. Dividing by 4 gives:
\[ x_k^2 \equiv -1 \pmod{p^2}. \]
Thus $n_k+2 = x_k^2+1 \equiv 0 \pmod{p^2}$.
\end{proof}

By Dirichlet's theorem on primes in arithmetic progressions, there exist infinitely many primes $p \equiv 5 \pmod 8$. For each such $p$ we may apply Lemma \ref{lemma : n_k+2 modulo p^2} to find an $n_{k}$ with $n_{k}+2\equiv 0\pmod {p^2}$, implying $Q_2(n_{k}+2)\geq p^2$. This means $Q_{2}(n_{k}+2)$ is unbounded, concluding the proof.

\end{solution}

\begin{addendum}[Feb 5, 2026] After the posting of this paper, Wouter van Doorn observed that the question solved is almost identical to a question in Erd\H{o}s-367. Moreover, the construction in {\Aletheia}'s proof is the same as that in van Doorn's Nov 20, 2025 comment on the Erd\H{o}s-367 problem page. We checked the thinking logs for Erd\H{o}s-935's solution attempt and confirmed that it did not access said problem page; also, the comment occurred after the base model's knowledge cutoff date, so that it was not in the training data. In light of this information, we re-classified the solution as an Independent Rediscovery. 
\end{addendum}

\subsection{Erdős-659}

\begin{problem}{Erdős-659 \cite{Er97e}}
    Is there a set of $n$ points in $\mathbb{R}^2$ such that every subset of $4$ points determines at least $3$ distances, yet the total number of distinct distances is\[\ll \frac{n}{\sqrt{\log n}}?\]
\end{problem}

\begin{remark}
{\Aletheia} answered this question affirmatively. Later, while checking the literature, we found essentially the same result in a 2014 blog post of Adam Sheffer \cite{sheffer2014point}. However, the argument there (due to Sheffer and Ben Lund) does not attempt to resolve this problem in full, as it does not discuss Configuration 2 below. During the preparation of this paper (but after all of our solutions were generated and evaluated), a full solution based on a Gemini 3.0 response was written up by Grayzel \cite{grayzel2026solutionproblemerdhosconcerning}. By searching through {\Aletheia}'s thinking logs, we verified at least that Sheffer's blog post was not accessed during the model's output generation.
\end{remark}

\begin{solution}{Solution to {Erdős-659}}
The answer to the question is in the affirmative, and we prove so by constructing the set $P$ as a subset of a specific integer lattice in $\mathbb{R}^2$. This lattice is chosen to ensure that all squared distances are integers (ruling out irrational distance ratios) and that it possesses no geometric symmetries that would allow for a 4-point subset with fewer than 3 distances.

\medskip
\textbf{1. Construction of the Lattice and Point Set}

Let $K = \mathbb{Q}(\sqrt{-7})$. Consider the ring of integers $\mathcal{O}_K$ generated over $\mathbb{Z}$ by $1$ and $\omega = \frac{1 + i\sqrt{7}}{2}$. We embed $\mathcal{O}_K$ into $\mathbb{R}^2$ to form the lattice $\Lambda$:
\[ \Lambda = \left\{ u = m(1, 0) + k\left(\frac{1}{2}, \frac{\sqrt{7}}{2}\right) \;\middle|\; m, k \in \mathbb{Z} \right\}. \]
The squared Euclidean norm of a vector $u = m + k\omega$ is given by the field norm $N(u)$:
\[ \|u\|^2 = \left(m + \frac{k}{2}\right)^2 + \left(\frac{k\sqrt{7}}{2}\right)^2 = m^2 + mk + 2k^2. \]
Let $Q(m, k) = m^2 + mk + 2k^2$. Since $m, k \in \mathbb{Z}$, the squared distance between any two points in $\Lambda$ is a non-negative integer. Thus, the ratio of any two squared distances in $\Lambda$ must be rational.\\
Let $P_n$ be the subset of $\Lambda$ consisting of the $n$ points closest to the origin. Since there are at least $\Omega(n)$ points closer to the origin than $u=\lceil\sqrt{n}\rceil(1,0)+\lceil\sqrt{n}\rceil\left(\frac{1}{2},\frac{\sqrt{7}}{2}\right)$, the points of $P_n$ are contained within a disk of radius $O(R)$ for $R=\|\lceil\sqrt{n}\rceil(1,0)+\lceil\sqrt{n}\rceil\left(\frac{1}{2},\frac{\sqrt{7}}{2}\right)\|=2\lceil\sqrt{n}\rceil$. We note $R^2 = \Theta(n)$.

\medskip
\textbf{2. Upper bound on the number of distinct Distances in $P_n$}

The set of distinct distances determined by $P_n$ is a subset of $\{\sqrt{N} \mid N \in \mathbb{Z}_{\ge 0}, N \le 4R^2, N \text{ represented by } Q\}$. 
The quadratic form $Q(m, k) = m^2 + mk + 2k^2$ is positive definite with discriminant $\Delta = 1^2 - 4(2) = -7$.
A theorem by Bernays \cite{Bernays1912}, generalizing the Landau--Ramanujan theorem, states that the number of positive integers $h \le X$ representable by a positive definite binary quadratic form is asymptotically
\[ \mathcal{B}(X) \sim C \frac{X}{\sqrt{\log X}}, \]
for some constant $C > 0$. Since the maximum squared distance in $P_n$ is at most $X = 4R^2 = O(n)$, the number of distinct distances satisfies:
\[ |D(P_n)| \le \mathcal{B}(O(n)) = O\left(\frac{n}{\sqrt{\log n}}\right). \]
Thus we have verified that our constructed $P_n$ indeed satisfies the condition about the number of distinct distances.

\medskip
\textbf{3. Lower bound on the number of distances determined by 4 points}

We conclude by proving that every subset of 4 distinct points in $\Lambda$ determines at least 3 distinct distances. Since no set of 4 points in $\mathbb{R}^2$ can determine exactly 1 distance (as the regular tetrahedron is not planar), it suffices to prove that $\Lambda$ contains no subset of 4 points determining exactly 2 distinct distances.\\
One can deduce from elementary observations that any such set must be similar to one of the following configurations:
\begin{enumerate}
    \item  A square.
    \item An isosceles trapezoid with its base length equal to its diagonal length.
    \item  A rhombus composed of two equilateral triangles sharing an edge.
    \item The equilateral triangle with its centroid.
    \item The equilateral triangle along with another point that has equal distance to two of the vertices of the triangle, in which case the convex hull of the four points forms an isosceles triangle.
    \item The equilateral triangle along with another point that has equal distance to two of the vertices of the triangle, in which case the convex hull of the four points forms a kite.
\end{enumerate}
Configuration 2 involves an irrational squared distance ratio of $4\cos^{2}{\frac{2\pi}{5}}=\frac{3-\sqrt{5}}{2}$ and therefore cannot exist in $\Lambda$. Since configurations 3 through 6 all contain an equilateral triangle as a sub-configuration, it remains to show that $\Lambda$ contains no squares or equilateral triangles.

\smallskip
\textbf{3.1. The lattice $\Lambda$ contains no equilateral triangles.}

Suppose that points $0, u, v \in \Lambda$ form an equilateral triangle. Representing $u,v$ by complex numbers corresponding to the lattice points we may let $v = e^{\pm i\pi/3} u$, which means $v/u = \frac{1 \pm i\sqrt{3}}{2}$.
For $u, v \in \Lambda$, the quotient $v/u$ must lie in the field of fractions of $\mathcal{O}_K$, which is $\mathbb{Q}(\sqrt{-7})$.
However, $\frac{1 \pm i\sqrt{3}}{2} \notin \mathbb{Q}(\sqrt{-7})$ because $\sqrt{-3} \notin \mathbb{Q}(\sqrt{-7})$ (since $3/7$ is not a square in $\mathbb{Q}$), a contradiction.

\smallskip
\textbf{3.2. The lattice $\Lambda$ contains no squares.}

Suppose points $0, u, w, u+w \in \Lambda$ form a square. Considering the complex number representations we may let $w = \pm i u$, i.e. $w/u = \pm i$. However we do not have $i\in\mathbb{Q}(\sqrt{-7})$, and analogously to 3.1 we reach a contradiction.

\medskip
From the conclusions from Part 2 and 3 we conclude that our constructed set $P_n$ satisfies the desired conditions.
    
\end{solution}

\subsection{Erdős-1089}

\begin{problem}{Erdős-1089 \cite{Er75f}}
    Let $g_d(n)$ be minimal such that every collection of $g_d(n)$ points in $\mathbb{R}^d$ determines at least $n$ many distinct distances. Estimate $g_d(n)$. In particular, does\[\lim_{d\to \infty}\frac{g_d(n)}{d^{n-1}}\]exist?
\end{problem}

\begin{remark}
Our human experts eventually discovered that this problem was solved by Bannai--Bannai in \cite[Remark 3(ii)]{Bannai81}. The authors seem unaware that their remark addresses a question of Erdős. 
Strictly speaking, {\Aletheia}'s thinking trace never accesses \cite{Bannai81}, although it mentions the sequel paper \cite{BBS83} many times, whose bibliographic data is extremely similar. Although this qualifies the solution as ``Independent Rediscovery'' according to our definition, it seems like a very likely case for ``subconscious'' AI plagiarism. 

We modified the original model's output only in the part relative to the upper bound. The model tried to give a sketchy overview of the proof of the upper bound from the work of Bannai--Bannai--Stanton (which it explicitly cited). In the interest of brevity we directly cite their main theorem. 
\end{remark}

\begin{solution}{Solution to Erdős-1089}
The problem asks for an estimate of $g_d(n)$ and the determination of the limit $\lim_{d\to \infty}\frac{g_d(n)}{d^{n-1}}$.

\begin{theorem}
Let $n \ge 1$ be an integer. Let $g_d(n)$ denote the minimal integer such that every set of $g_d(n)$ distinct points in $\mathbb{R}^d$ determines at least $n$ distinct non-zero distances.
\begin{itemize}
    \item For $n=1$, $g_d(1) = 2$, and the limit is $2$.
    \item For $n \ge 2$, the limit exists and is equal to $\frac{1}{(n-1)!}$.
\end{itemize}
\end{theorem}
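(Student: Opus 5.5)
The case $n=1$ is immediate: a single point determines no nonzero distance, while any two distinct points determine one. Hence $g_d(1)=2$ for every $d$. Since $d^{0}=1$, the ratio in the statement is the constant $2$, and so is its limit.

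For $n\ge 2$ the strategy is to squeeze $g_d(n)$ between two quantities that both behave like $\binom{d+n-1}{n-1}\sim d^{n-1}/(n-1)!$. Call a set an \emph{$s$-distance set} if it determines at most $s$ distinct nonzero distances. Then $g_d(n)-1$ is exactly the maximum size of an $(n-1)$-distance set in $\mathbb{R}^d$. So it suffices to show
\[
\binom{d+1}{n-1}\le g_d(n)-1\le \binom{d+n-1}{n-1},
\]
after which dividing by $d^{n-1}$ and letting $d\to\infty$ gives the limit $1/(n-1)!$. The limit exists because both outer bounds are polynomials in $d$ of degree $n-1$ with leading coefficient $1/(n-1)!$.

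\textbf{Lower bound.} Take all $0/1$ vectors in $\mathbb{R}^{d+1}$ with exactly $n-1$ ones. These lie in the hyperplane $\sum x_i=n-1$, which is isometric to $\mathbb{R}^d$. The squared distance between two such vectors is $2j$, where $j\in\{1,\dots,n-1\}$ is the number of coordinates in which one has a $1$ and the other a $0$. So at most $n-1$ distances occur, and we obtain $\binom{d+1}{n-1}$ points in $\mathbb{R}^d$ with at most $n-1$ distances, which gives $g_d(n)\ge \binom{d+1}{n-1}+1$. This step is routine.

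\textbf{Upper bound.} Here I would cite the Bannai--Bannai--Stanton theorem \cite{BBS83}: an $s$-distance set in $\mathbb{R}^d$ has at most $\binom{d+s}{s}$ points. With $s=n-1$ this gives $g_d(n)\le \binom{d+n-1}{n-1}+1$.

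If a self-contained argument is preferred, the standard polynomial method goes as follows. For each point $p$ of an $s$-distance set $X$ with distances $\delta_1,\dots,\delta_s$, set
\[
f_p(x)=\prod_{i=1}^{s}\bigl(\lVert x-p\rVert^2-\delta_i^2\bigr).
\]
These functions satisfy $f_p(q)=0$ for $q\neq p$ and $f_p(p)\ne 0$, so they are linearly independent. All of them lie in the span of polynomials of degree at most $s$ in the $d$ coordinates together with powers of $\lVert x\rVert^2$. This yields the Blokhuis-type bound, which is of order $\binom{d+s}{s}+\binom{d+s-1}{s-1}$. The refinement that removes the second term is exactly the content of \cite{BBS83}, and that step is the main obstacle if one insists on avoiding the citation. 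For the asymptotic question, however, even the cruder bound suffices, since $\binom{d+s-1}{s-1}=O(d^{s-1})$ is negligible compared with $d^{s}$. The conclusion $\lim_{d\to\infty} g_d(n)/d^{n-1}=1/(n-1)!$ therefore follows by the squeeze argument above.
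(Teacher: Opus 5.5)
Your proposal is correct and follows essentially the same route as the paper. The ingredients match: the reduction of $g_d(n)-1$ to the maximum size of an $(n-1)$-distance set, the lower bound $\binom{d+1}{n-1}$ from weight-$(n-1)$ binary vectors on the hyperplane $\sum x_i=n-1$ in $\mathbb{R}^{d+1}$, the upper bound $\binom{d+n-1}{n-1}$ cited from Bannai--Bannai--Stanton \cite{BBS83}, and a squeeze. Your optional observation is also correct: the cruder polynomial-method bound $\binom{d+s}{s}+\binom{d+s-1}{s-1}$ already suffices for the limit, and it is in effect the earlier bound of \cite{Bannai81}, the paper the authors credit with resolving the problem.
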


\begin{proof} 

\textbf{1. Reduction to $s$-distance sets}

Let $P \subset \mathbb{R}^d$ be a finite set of points. The set of distinct non-zero distances determined by $P$ is defined as $D(P) = \{ \|x-y\| : x,y \in P, x \neq y \}$.
The definition of $g_d(n)$ implies that if $|P| \ge g_d(n)$, then $|D(P)| \ge n$.
Taking the contrapositive, if $|D(P)| \le n-1$, then $|P| < g_d(n)$.
Thus, $g_d(n) - 1$ is the maximum possible cardinality of a set $P \subset \mathbb{R}^d$ that determines at most $n-1$ distinct distances.
Let $s = n-1$. Let $M_d(s)$ denote the maximum size of an $s$-distance set in $\mathbb{R}^d$ (a set determining at most $s$ distinct distances).
Then:
\[ g_d(n) = M_d(s) + 1. \]

\textbf{2. Case $n=1$}

If $n=1$, then $s=0$.
A set with 0 distinct non-zero distances cannot contain any pair of distinct points. Thus, it contains at most 1 point.
So $M_d(0) = 1$.
Therefore, $g_d(1) = 1 + 1 = 2$.
The limit is:
\[ \lim_{d\to \infty} \frac{g_d(1)}{d^{1-1}} = \frac{2}{1} = 2. \]

\textbf{3. Case $n \ge 2$}

Here $s = n-1 \ge 1$. We estimate $M_d(s)$.

\textit{Upper Bound.}
We use the bound for $s$-distance sets in Euclidean space established by Bannai, Bannai, and Stanton \cite{BBS83}. Theorem $1$ therein states precisely the upper bound
$$M_d(s) \leq \binom{d+s}{s}.
$$
\textit{Lower Bound.}
We construct a set with a large number of points determining at most $s$ distances.
Consider the vector space $\mathbb{R}^{d+1}$. Let $V$ be the set of binary vectors with exactly $s$ ones (Hamming weight $s$):
\[ V = \left\{ v \in \{0,1\}^{d+1} : \sum_{i=1}^{d+1} v_i = s \right\}. \]
All points in $V$ satisfy the equation $\sum x_i = s$, defining a hyperplane $H \subset \mathbb{R}^{d+1}$. Since $H$ is a $d$-dimensional affine subspace, it is isometric to $\mathbb{R}^d$. Thus, $V$ is isometrically embeddable in $\mathbb{R}^d$.
The cardinality of $V$ is:
\[ |V| = \binom{d+1}{s}. \]
Now, let $u, v \in V$ be distinct points. The squared Euclidean distance is:
\[ \|u-v\|^2 = \sum_{i=1}^{d+1} (u_i - v_i)^2. \]
Since $u_i, v_i \in \{0,1\}$, $(u_i - v_i)^2 = |u_i - v_i|$.
Also, $\sum |u_i - v_i| = \text{weight}(u) + \text{weight}(v) - 2|u \cap v| = s + s - 2k = 2(s-k)$, where $k$ is the number of common ones.
Since $u \ne v$, the intersection size $k$ is an integer satisfying $0 \le k \le s-1$.
Thus, the possible squared distances are $\{ 2(s-k) : k = 0, 1, \dots, s-1 \}$.
There are at most $s$ distinct values (specifically $\sqrt{2}, \sqrt{4}, \dots, \sqrt{2s}$).
Therefore, $V$ is an $s$-distance set.
This implies:
\[ M_d(s) \ge |V| = \binom{d+1}{s}. \]
Substituting $s=n-1$:
\[ g_d(n) \ge \binom{d+1}{n-1} + 1. \]

\textbf{4. Calculating the Limit}

We combine the inequalities for $n \ge 2$:
\[ \binom{d+1}{n-1} + 1 \le g_d(n) \le \binom{d+n-1}{n-1} + 1. \]
Dividing by $d^{n-1}$:
\[ \frac{\binom{d+1}{n-1}}{d^{n-1}} + \frac{1}{d^{n-1}} \le \frac{g_d(n)}{d^{n-1}} \le \frac{\binom{d+n-1}{n-1}}{d^{n-1}} + \frac{1}{d^{n-1}}. \]
We evaluate the limits of the bounding terms as $d \to \infty$.
Using the expansion $\binom{N}{k} = \frac{N^k}{k!} + O(N^{k-1})$, we have:
\[ \lim_{d\to \infty} \frac{\binom{d+1}{n-1}}{d^{n-1}} = \lim_{d\to \infty} \frac{(d+1)\cdots(d-n+3)}{d^{n-1}(n-1)!} = \frac{1}{(n-1)!}. \]
Similarly,
\[ \lim_{d\to \infty} \frac{\binom{d+n-1}{n-1}}{d^{n-1}} = \lim_{d\to \infty} \frac{(d+n-1)\cdots(d+1)}{d^{n-1}(n-1)!} = \frac{1}{(n-1)!}. \]
Since the terms $1/d^{n-1}$ vanish as $d \to \infty$, both the upper and lower bounds converge to $\frac{1}{(n-1)!}$.
By the Squeeze Theorem,
\[ \lim_{d\to \infty}\frac{g_d(n)}{d^{n-1}} = \frac{1}{(n-1)!}. \qedhere \]

\end{proof}

\textbf{Answer:}
For $n \ge 2$, the limit exists and equals $\frac{1}{(n-1)!}$.
For $n=1$, the limit is $2$.

\medskip
\textbf{References}
\begin{enumerate}
    \item \cite{BBS83} Bannai, E., Bannai, E., and Stanton, D. (1983). An upper bound for the cardinality of an s-distance subset in real Euclidean space, II. \textit{Combinatorica}, 3(2), 147–152.
\end{enumerate}
\end{solution}

\section{Literature identification}\label{sec:literature-identification} On these problems, {\Aletheia} found that a solution was already explicitly in the literature, despite the problem being marked ``Open'' on Bloom's website.

\subsection{Erdős-333}

\begin{problem}{Erdős-333 \cite{ErGr80}}
    Let $A\subseteq \mathbb{N}$ be a set of density zero. Does there exist a basis $B$ such that $A\subseteq B+B$ and\[\lvert B\cap \{1,\ldots,N\}\rvert =o(N^{1/2})\]for all large $N$?
\end{problem}

\begin{remark}
    This problem briefly made initial waves on social media, as it was thought to have been the first fully AI-generated solution to an Erd\H{o}s problem (due to an honest oversight). Our team was able to quickly correct this misconception on \href{https://www.erdosproblems.com/}{\ttfamily ErdosProblems.com} (response by user ``KoishiChan'' in the entry for Erdős-333), knowing that {\Aletheia} had already discovered the solution in the literature. Somewhat puzzlingly, the solution is an immediate consequence of a Theorem of Erd\H{o}s himself (in joint work with Newman) \cite{EN77}, in a paper that predates the question. The model's answer recapitulated the argument of \cite{EN77}, without explicitly referring to Erd\H{o}s--Newman, but then gave the reference \cite{EN77} in its last line. We replaced this recapitulation with a brief reference to \cite{EN77}.
\end{remark}

\begin{solution}{Solution to {Erdős-333}}
A negative answer to this question follows from \cite[Theorem 2]{EN77}, which is on the minimum size of a set \(B\) such that \(B+B\) contains a given finite set of positive integers, together with the method of extending this result to infinite sets. The latter is not described in \cite{EN77} with detail but referred to as \emph{the familiar process of condensation}.

To elaborate, \cite[Theorem 2]{EN77} shows that for almost all subsets $A$ of $[N]$ with size $n$ which contain \(N\), the minimum size of $B$ with $A \subseteq B + B$ is at least $\min(n/\log N, N^{1/2} / 2)$. Let us say $A$ is \emph{bad with respect to $N$} if this is the case. We choose a sequence of sufficiently rapidly growing dyadic $N_0, N_1, N_2, \dots$. Apply the theorem to obtain, for each sufficiently large \(k\), a set $A_k \subseteq [N_k, 2N_k]$ of size $n_k = \lfloor N_k^{0.6} \rfloor$ which is bad with respect to $2N_k$. This is possible because the proof of the theorem provides a quantitative upper bound on the number of bad sets with respect to \(N\). Then $A \coloneqq \bigcup_k A_k$ has density zero, and there is no set \(B\) such that \(A\subseteq B+B\) and \(|B\cap[N]|=o(N^{1/2})\) for all sufficiently large \(N\).

\medskip
\textbf{References}\begin{enumerate} \item \cite{EN77} P. Erdős and D. J. Newman, ``Bases for sets of integers'', \textit{Journal of Number Theory}, vol.~9, no.~4, pp.~420--425, 1977.
\end{enumerate}
\end{solution}

\subsection{Erdős-591}

\begin{problem}{Erdős-591 \cite{Er82e, Er87}}
    Let $\alpha$ be the infinite ordinal $\omega^{\omega^2}$. Is it true that in any red/blue colouring of the edges of $K_\alpha$ there is either a red $K_\alpha$ or a blue $K_3$?
\end{problem}

\begin{remark} In \cite[Theorem 6.2]{darby99}, it is recorded that Darby and Schipperus (independently, according to~\cite{HajLar10}) solved the problem in the affirmative, and that Larson strengthened the result to $K_4$. Darby's result was not published, and ten years after~\cite{darby99}, Schipperus published a stronger result~\cite{Sch10}. \Aletheia~also noted an extra result on the negative direction; namely, $ \omega^{\omega^2} \not\to (\omega^{\omega^2}, 5)^2 $. This result  appears in~\cite{darby99} (published in~\cite{Larson00}), but \Aletheia~gave an invalid argument for this extra part, which we omit here.
\end{remark}

\begin{solution}{Solution to {Erdős-591}}
\begin{theorem}
    The partition relation $\omega^{\omega^2} \to (\omega^{\omega^2}, 3)^2$ holds. That is, for any coloring of the edges of the complete graph on $\omega^{\omega^2}$ with two colors (red and blue), there exists either a red complete subgraph of order type $\omega^{\omega^2}$ or a blue complete subgraph of size 3.
\end{theorem}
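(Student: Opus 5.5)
This is a Literature Identification entry, so the proof I would give is a reduction to the published partition calculus rather than a self-contained argument. The first step is to cite \cite[Theorem 6.2]{darby99}, where the relation $\omega^{\omega^2} \to (\omega^{\omega^2}, 3)^2$ is recorded as proved by Darby and, independently, by Schipperus. Since Darby's argument was never published, I would anchor the citation in Schipperus's paper \cite{Sch10}. There the positive relation $\omega^{\omega^{\beta}} \to (\omega^{\omega^{\beta}}, 3)^2$ is obtained for a range of exponents including $\beta = 2$; it is in fact stronger, giving $\omega^{\omega^\beta}\to(\omega^{\omega^\beta},k)^2$ for suitable $k$. I would also note Larson's strengthening to $K_4$ in \cite{Larson00}, which implies the $K_3$ case a fortiori, because a blue $K_4$ contains a blue $K_3$. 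The remaining bookkeeping is to check that the red/blue edge-colouring formulation of the problem is exactly the arrow relation $\alpha\to(\alpha,3)^2$. Concretely, the first coordinate asks for a red homogeneous set of order type $\alpha$ and the second for a blue homogeneous set of size $3$.

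If I wanted to indicate the shape of the argument rather than only cite it, I would follow the standard strategy for relations $\omega^{\omega^\beta}\to(\omega^{\omega^\beta},m)^2$. Identify $\omega^{\omega^2}$ with the set of finite increasing sequences of ordinals below $\omega^2$, or equivalently with ``trees'' of blocks of order type $\omega^{\omega\cdot n}$, ordered appropriately. Then suppose there is no blue triangle, so that each vertex's blue neighbourhood is red-homogeneous. Next, use a canonization step: by Ramsey-type thinning, one passes to a copy of $\omega^{\omega^2}$ on which the colour of a pair depends only on its finite ``type'', that is, how the Cantor normal forms of the two ordinals interleave. Finally, do a case analysis on types. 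If any type is blue, combine it with a second blue type to build a blue triangle; otherwise the copy is entirely red.

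The main obstacle is the canonization and interleaving analysis. For $\omega^{\omega^2}$, unlike $\omega^\omega$, there are infinitely many pair-types, and one must show that a blue triangle can always be assembled from any two blue types, possibly from a single one. This is precisely where the published work (Schipperus's ``tree'' machinery) is substantive. For this reason I would not attempt to reproduce it in full here and would instead rely on the citation, as was done for Erd\H{o}s-333.
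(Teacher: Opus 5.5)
Your proposal is essentially the paper's proof. Both reduce the statement to the Darby--Schipperus result recorded in \cite[Theorem 6.2]{darby99} and published in \cite{Sch10}, with Larson's $K_4$ strengthening as a fallback; the paper additionally checks explicitly that $\beta=2=1+1$ is a sum of two additively indecomposable ordinals, which is the hypothesis of Schipperus's theorem. Your optional sketch should not be offered as the argument, because its dichotomy ``some pair-type is blue, hence a blue triangle; otherwise the copy is all red'' overlooks blue types that form no triangle, where one must instead pass to a red copy avoiding them; since you rest the proof on the citation, this does not affect correctness.
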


\begin{proof}

We establish the validity of the relation by invoking the characterization of countable partition ordinals derived by Rene Schipperus and Carl Darby.

\medskip
\textbf{1. Representation of the Ordinal}\\
Let $\alpha = \omega^{\omega^2}$. We express $\alpha$ in the standard form used for partition ordinals:
$$ \alpha = \omega^{\omega^\beta} $$
Comparing exponents, we identify $\beta = 2$.

\medskip
\textbf{2. Characterization of Countable Partition Ordinals}\\
The set of countable ordinals $\alpha$ that satisfy the partition relation $\alpha \to (\alpha, 3)^2$ was characterized by Schipperus.

\begin{theorem}[Schipperus, \cite{Sch10}]
    For a countable ordinal $\beta$, the relation $\omega^{\omega^\beta} \to (\omega^{\omega^\beta}, 3)^2$ holds if and only if $\beta$ satisfies one of the following conditions:
    1.  $\beta$ is additively indecomposable.
    2.  $\beta$ is the sum of exactly two additively indecomposable ordinals, i.e., $\beta = \gamma + \delta$ where $\gamma, \delta$ are indecomposable and $\gamma \ge \delta \ge 1$.
\end{theorem}

\textbf{3. Verification for $\beta = 2$}\\
We test the ordinal $\beta = 2$ against the stated conditions.
Recall that an ordinal is additively indecomposable if it is a power of $\omega$ (including $\omega^0 = 1$).
We decompose $\beta$:
$$ 2 = 1 + 1 $$
Here, $\gamma = 1$ and $\delta = 1$.
Since $1 = \omega^0$, both $\gamma$ and $\delta$ are additively indecomposable.
The inequality condition requires $\gamma \ge \delta \ge 1$.
Substituting the values, we have $1 \ge 1 \ge 1$, which is clearly true.

Thus, $\beta = 2$ satisfies Condition 2 of Schipperus's Theorem.

Additionally, Larson extended this result in \cite{Larson00} to show that the relation actually holds for target size 4 (unpublished, but announced in~\cite{darby99}):
$$ \omega^{\omega^2} \to (\omega^{\omega^2}, 4)^2. $$
Since the existence of a blue $K_4$ implies the existence of a blue $K_3$, the positive result for 3 is a direct consequence of this stronger theorem.
\end{proof}

\medskip
\textbf{References}
\begin{enumerate}
    \item\cite{Sch10} Schipperus, R. (2010). \textit{Countable partition ordinals}. {Annals of Pure and Applied Logic}, 161(10), 1195–1215.
    
    
    \item\cite{darby99} Darby, C. (1999). \textit{Negative Partition Relations for Ordinals}. Journal of Combinatorial Theory, Series B, 76, 205--222.

    \item\cite{Larson00} Larson, J. A. (2000). \textit{An ordinal partition avoiding pentagrams}. The Journal of Symbolic Logic. Vol.~65, No.~3, pp.~969--978.
\end{enumerate}
\end{solution}

\subsection{Erdős-705}

\begin{problem}{Erdős-705 \cite{Er81}}

Let $G$ be a finite unit distance graph in $\mathbb{R}^2$ (i.e. the vertices are a finite collection of points in $\mathbb{R}^2$ and there is an edge between two points if and only if the distance between them is $1$). Is there some $k$ such that if $G$ has girth $\geq k$ (i.e. $G$ contains no cycles of length $<k$) then $\chi(G)\leq 3$?
\end{problem}

\begin{remark}
    \Aletheia's response identifies a result of O'Donnell \cite{OD99Thesis} from 2000 which immediately solves this problem. This is another instance where the model spends ``slack (reasoning) moves'' on obvious deductions, for example allocating a sentence to argue that girth $> k$ implies girth $\geq k$, and another sentence to observe that $4 \leq 3$ is a contradiction. We streamlined the proof by trimming these obvious steps. 
\end{remark}

\begin{solution}{Solution to {Erdős-705}}

The answer to the question is \textbf{No}.

\begin{theorem}
    There does not exist an integer $k$ such that every finite unit distance graph $G$ in $\mathbb{R}^2$ with girth at least $k$ satisfies $\chi(G) \leq 3$. 
\end{theorem}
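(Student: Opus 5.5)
The plan is to reduce the statement directly to O'Donnell's theorem \cite{OD99Thesis}. That theorem says: for every integer $g$ there is a finite unit distance graph in $\mathbb{R}^2$ with girth at least $g$ and chromatic number $4$. O'Donnell's construction actually gives a stronger conclusion, namely a $4$-chromatic unit distance graph of arbitrarily large girth. We only need the existence statement.

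First, suppose for contradiction that some $k$ has the property in the statement, so every finite unit distance graph with girth $\geq k$ is $3$-colorable. Apply O'Donnell's theorem with $g = k$ to obtain a finite unit distance graph $G_k \subset \mathbb{R}^2$ of girth at least $k$ with $\chi(G_k) = 4$. The hypothesis then forces $\chi(G_k) \le 3$, a contradiction. Hence no such $k$ exists.

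The second step is to check that the cited result matches the problem's conventions. Three points need care.
\begin{enumerate}
\item O'Donnell's graphs are finite and are embedded with all edges of unit length.
\item The question uses ``unit distance graph'' in the induced sense: edges join \emph{all} pairs at distance $1$. O'Donnell builds embeddings in which certain pairs are required to be at unit distance. Taking the induced unit distance graph on the same vertex set can only add edges, and added edges cannot lower the chromatic number, but they could create short cycles. So we must confirm that O'Donnell's construction, or a generic perturbation of it, avoids accidental unit distances. I believe the thesis handles this by choosing rotations generically so that the only unit distances are the intended ones.
\item Once the graph is induced and still of high girth, its chromatic number is at least $4$. It is at most $4$ anyway by the known bounds, though we only need the lower bound.
\end{enumerate}

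The main obstacle is this faithfulness of the embedding, since it is exactly where ``girth at least $k$'' could fail for the induced graph. If the thesis only gives a non-induced embedding, the fallback is a genericity argument. The construction has finitely many free rotation parameters, and each accidental unit distance imposes a nontrivial real-analytic equation on them. Choosing parameters outside a measure-zero set removes all unintended unit distances while preserving the intended ones, hence preserving both the girth and $\chi \ge 4$.
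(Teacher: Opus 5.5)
Your argument is correct and is essentially the paper's proof, which simply invokes O'Donnell's theorem (for every $g \ge 3$, a finite unit distance graph in $\mathbb{R}^2$ of girth greater than $g$ and chromatic number $4$) and concludes immediately. The paper takes O'Donnell's graphs to be unit distance graphs in the ``if and only if'' sense without comment, so your discussion of faithful embeddings goes beyond it; note, though, that your genericity fallback is only a sketch, since its real content would be showing that no unintended unit distance is forced identically by the intended constraints.
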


\begin{proof}
Recall that a unit distance graph $G = (V, E)$ is defined by a finite set of vertices $V \subset \mathbb{R}^2$ where $\{u, v\} \in E$ if and only if $\|u - v\| = 1$.

We invoke a result established by Paul O'Donnell in 2000.

\begin{theorem}[O'Donnell, 2000]
For every integer $g \ge 3$, there exists a finite unit distance graph $H_g$ in $\mathbb{R}^2$ such that:
\begin{enumerate}
    \item The girth of $H_g$ is greater than $g$.
    \item The chromatic number of $H_g$ is 4.
\end{enumerate}
\end{theorem}

This immediately completes the proof. 
\end{proof}

\medskip
\textbf{References}
\begin{enumerate}
    \item \cite{OD99Thesis} O'Donnell, Paul. ``High-girth unit-distance graphs''. Rutgers The State University of New Jersey, School of Graduate Studies, 1999.
    
    \item \cite{OD00I}  O'Donnell, Paul., ``Arbitrary Girth, 4-Chromatic Unit Distance Graphs in the Plane I: Graph Description''. \textit{Geombinatorics} 9, no.~3 (2000): 145–152.
    
    \item \cite{OD00II} O'Donnell, Paul., ``Arbitrary Girth, 4-Chromatic Unit Distance Graphs in the Plane II: Graph Embedding''. \textit{Geombinatorics} 9, no.~4 (2000): 180–193.
\end{enumerate}
		
\end{solution}											

\subsection{Erdős-992}

\begin{problem}{Erdős-992 \cite{Er64}}
    Let $x_1<x_2<\cdots$ be an infinite sequence of integers. Is it true that, for almost all $\alpha \in [0,1]$, the discrepancy\[D(N)=\max_{I\subseteq [0,1]} \lvert \#\{ n\leq N : \{ \alpha x_n\}\in I\} - \lvert I\rvert N\rvert\]satisfies\[D(N) \ll N^{1/2}(\log N)^{o(1)}?\]Or even\[D(N)\ll N^{1/2}(\log\log N)^{O(1)}?\]
\end{problem}

\begin{remark} This problem is solved in a paper of Berkes--Philipp \cite{BP94}. {\Aletheia} correctly cites \cite{BP94}, and explains in excess detail why it answers the question negatively. We deleted the explanation as it is already given explicitly in the original paper. 
\end{remark}

\begin{solution}{Solution to Erdős-992}
    The answer is \textbf{no}. There exists a strictly increasing sequence of integers $(x_n)_{n \ge 1}$ such that for almost all $\alpha \in [0,1]$, the discrepancy
    \[ D(N) = \sup_{I\subseteq [0,1]} \lvert \#\{ n\leq N : \{ \alpha x_n\}\in I\} - \lvert I\rvert N\rvert \]
    satisfies
    \[ \limsup_{N \to \infty} \frac{D(N)}{\sqrt{N \log N}} > 0. \]
    In particular, the proposed bound $D(N) \ll N^{1/2}(\log N)^{o(1)}$ fails for this sequence for almost all $\alpha$. Consequently, the stronger bound $D(N) \ll N^{1/2}(\log\log N)^{O(1)}$ also fails.
This was proved by István Berkes and Walter Philipp in \cite{BP94}.

\textbf{References}
\begin{enumerate}
    \item \cite{BP94} I. Berkes and W. Philipp, ``The size of trigonometric and Walsh series and uniform distribution mod 1'', \textit{Journal of the London Mathematical Society} (2) \textbf{50} (1994), 454–464.
\end{enumerate}

\end{solution}

\subsection{Erdős-1105}

\begin{problem}{Erdős-1105 \cite{ESS75}}
    The anti-Ramsey number $\mathrm{AR}(n,G)$ is the maximum possible number of colours in which the edges of $K_n$ can be coloured without creating a rainbow copy of $G$ (i.e. one in which all edges have different colours). Let $C_k$ be the cycle on $k$ vertices. Is it true that\[\mathrm{AR}(n,C_k)=\left(\frac{k-2}{2}+\frac{1}{k-1}\right)n+O(1)?\]Let $P_k$ be the path on $k$ vertices and $\ell=\lfloor\frac{k-1}{2}\rfloor$. If $n\geq k\geq 5$ then is $\mathrm{AR}(n,P_k)$ equal to\[\max\left(\binom{k-2}{2}+1, \binom{\ell-1}{2}+(\ell-1)(n-\ell+1)+\epsilon\right)\]where $\epsilon=1$ if $k$ is odd and $\epsilon=2$ otherwise?
\end{problem}

\begin{remark}
    The problem on cycles is resolved by Montellano-Ballesteros--Neumann-Lara in \cite{MBNL05}. 
    The problem on paths is resolved by (unpublished) work of Yuan \cite{Yuan21}, which is already noted on Bloom's webpage. {\Aletheia}'s response amounts to nothing more than citing and summarizing these papers, so below we trim it significantly. {\Aletheia} hallucinates publication information for \cite{Yuan21}, which is in fact still unpublished.
\end{remark}

\begin{solution}{Solution to Erdős-1105}

\begin{theorem}
    For any integer $k \ge 3$, the anti-Ramsey number of the cycle $C_k$ satisfies
    \[ \mathrm{AR}(n, C_k) = \left(\frac{k-2}{2} + \frac{1}{k-1}\right)n + O(1). \]
\end{theorem}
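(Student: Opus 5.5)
The plan is essentially a literature reduction, as in the other solutions of this section. Montellano-Ballesteros and Neumann-Lara \cite{MBNL05} determined $\mathrm{AR}(n,C_k)$ exactly for all $n\ge k\ge 3$:
\[
\mathrm{AR}(n,C_k)=\left(\left\lfloor\frac{n}{k-1}\right\rfloor\left(\frac{k-2}{2}+\frac{1}{k-1}\right)+\left\lceil\frac{r}{2}\right\rceil\right)n+O(1)
\]
in the normalized form
\[
\mathrm{AR}(n,C_k)=\binom{k-1}{2}\left\lfloor\frac{n}{k-1}\right\rfloor+\binom{r}{2}+\left\lfloor\frac{n}{k-1}\right\rfloor-1+\epsilon_r,
\]
where $n\equiv r\pmod{k-1}$, $0\le r<k-1$, and $\epsilon_r\in\{0,1\}$. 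This confirms the conjecture of Erd\H{o}s--Simonovits--S\'os. I would quote this formula (checking the exact statement against \cite{MBNL05}) and then only carry out the asymptotic extraction.

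Write $n=q(k-1)+r$. The main term is
\[
\binom{k-1}{2}q+q=\left(\frac{(k-1)(k-2)}{2}+1\right)\frac{n-r}{k-1}=\left(\frac{k-2}{2}+\frac{1}{k-1}\right)(n-r).
\]
All remaining contributions, namely $\binom{r}{2}$, the $-1+\epsilon_r$, and the correction $\left(\frac{k-2}{2}+\frac1{k-1}\right)r$, are bounded in terms of $k$ alone, since $r<k-1$. Hence they are $O(1)$ for fixed $k$, which gives the claimed asymptotic.

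For completeness I would also sketch why the formula is plausible, or at least the lower bound. Partition the vertex set into $q$ blocks of size $k-1$, plus a remainder block. Colour each block rainbow with fresh colours. Then arrange the blocks linearly and colour all edges between block $i$ and the later blocks with one new colour per $i$. Any rainbow cycle must use at most one edge of each inter-block colour, and this forces it to live inside a single block, which has only $k-1$ vertices, so there is no rainbow $C_k$. This gives roughly $q\binom{k-1}{2}+q$ colours, matching the main term.

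The upper bound is the hard direction, an induction on $n$ handled in \cite{MBNL05}, and I would not reprove it. The main obstacle is therefore bibliographic rather than mathematical: making sure the cited exact formula (including the $\epsilon_r$ term and the range $n\ge k$) is stated correctly. Once that is settled, the $O(1)$ extraction is routine.
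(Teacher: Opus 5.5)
Your plan matches the paper's proof, which simply invokes the exact determination of $\mathrm{AR}(n,C_k)$ by Montellano-Ballesteros and Neumann-Lara \cite{MBNL05}. Your extraction of the main term $\bigl(\frac{k-2}{2}+\frac{1}{k-1}\bigr)(n-r)$ is correct, and so is your block construction, which attains exactly $\binom{k-1}{2}q+\binom{r}{2}+\lceil n/(k-1)\rceil-1$ colours. One fix: delete your first display, which is not a form of the cited result. It carries a factor $\lfloor n/(k-1)\rfloor\cdot n$, so it grows quadratically in $n$ and contradicts the statement you are proving. Quote only the second, exact formula.
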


\begin{proof}This follows from the exact determination of $\mathrm{AR}(n, C_k)$ by Montellano-Ballesteros and Neumann-Lara \cite{MBNL05}.
\end{proof}

\begin{theorem}
    Let $P_k$ be a path on $k$ vertices and $\ell = \lfloor \frac{k-1}{2} \rfloor$. For all integers $n \ge k \ge 5$, the anti-Ramsey number $\mathrm{AR}(n, P_k)$ is given exactly by:
    \[ \mathrm{AR}(n, P_k) = \max\left(\binom{k-2}{2}+1, \binom{\ell-1}{2}+(\ell-1)(n-\ell+1)+\epsilon\right), \]
    where $\epsilon=1$ if $k$ is odd and $\epsilon=2$ if $k$ is even.
\end{theorem}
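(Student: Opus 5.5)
The plan is to treat this theorem the same way the paper treats the cycle case: it is a literature identification. By the remark above, the exact value of $\mathrm{AR}(n,P_k)$ for $n\ge k\ge 5$ is claimed in the (unpublished) work of Yuan \cite{Yuan21}. The proof would therefore cite that main theorem, after first checking that its normalisations agree with ours. The points to check are these: $P_k$ has $k$ vertices rather than $k$ edges; $\ell=\lfloor (k-1)/2\rfloor$; and $\epsilon$ depends on the parity of $k$ exactly as stated. I would translate indices carefully, since shifting $k$ by one between the vertex and edge conventions is the most likely source of a mismatch.

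For the reader's benefit, I would also sketch why the two terms in the maximum are lower bounds. For the first term, colour a $K_{k-2}$ rainbow and give every remaining edge one extra colour. Any copy of $P_k$ has $k$ vertices, so it must use at least two vertices outside the $K_{k-2}$. Hence it uses at least two edges that are not inside the $K_{k-2}$, all of which share the extra colour, so it is not rainbow. This gives $\binom{k-2}{2}+1$ colours.

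For the second term, take a set $L$ of $\ell-1$ vertices and colour every edge touching $L$ rainbow, which gives $\binom{\ell-1}{2}+(\ell-1)(n-\ell+1)$ colours. Then colour the remaining edges with $\epsilon$ further colours. A rainbow path can contain at most $\epsilon$ edges avoiding $L$, while every other edge of the path must touch $L$. Since $|L|=\ell-1$, this forces the path to have fewer than $k$ vertices. The parity of $k$ is exactly what allows $\epsilon=2$ when $k$ is even; there the second extra colour is placed, for instance, on a matching-type structure.

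The matching upper bound is the genuinely hard part. It would proceed by induction on $n$: delete a vertex of small colour degree, or otherwise find a rainbow $P_k$ by extending a longest rainbow path using a Pósa-type rotation argument. I would not reprove this; I would defer entirely to \cite{Yuan21}.

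The main obstacle is not mathematical but bibliographic. The result is unpublished, so the "proof" rests on an unrefereed preprint. The write-up should say plainly that the claim is conditional on the correctness of \cite{Yuan21}. It should also avoid the hallucinated publication details mentioned in the remark, and should note that the cases of small $n$ relative to $k$, where the first term dominates, are covered there.
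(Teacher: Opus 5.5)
Your proposal matches the paper, whose proof of this theorem consists solely of citing Yuan's (unpublished) result \cite{Yuan21}. Your extra lower-bound sketches are correct and go beyond what the paper records, though for even $k$ the matching-type placement is unnecessary: the same segment count shows that any split of the edges inside $V\setminus L$ into two colours already rules out a rainbow $P_k$.
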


\begin{proof}
This was established by Yuan \cite{Yuan21}. 
\end{proof}

\textbf{References}
\begin{enumerate}
    \item\cite{MBNL05} J. J. Montellano-Ballesteros and V. Neumann-Lara, ``An anti-Ramsey theorem on cycles'',  \textit{Graphs and Combinatorics}, vol.~21, no.~3, pp.~343–354, 2005.
    
    \item\cite{Yuan21} L. Yuan, ``The anti-Ramsey number for paths''.
\end{enumerate}
\end{solution}

\appendix

\section{Erd\H{o}s-75: a case study within a case study}\label{app:75}

The purpose of this Appendix is to document an example of issues that can arise in evaluating a solution \emph{even after correctness is ascertained}. For Erd\H{o}s-75, {\Aletheia} devised a correct solution to a non-trivial problem, by Literature Identification. Moreover, a different internal model, when prompted with Erd\H{o}s-75, devised a solution that solved a strengthening of Erd\H{o}s-75 explicitly asked by Erd\H{o}s in \cite{Erd95} (and stated on \href{https://www.erdosproblems.com/}{\ttfamily ErdosProblems.com}). The proof of this strengthened version was later found to be already in \cite{EHS82}, by essentially the same argument. However, we had difficulty auditing the logs to see if this was a genuine Independent Rediscovery or AI plagiarism. 

The point became moot when, after consulting external experts, we discovered that the problem as listed on \href{https://www.erdosproblems.com/}{\ttfamily ErdosProblems.com} was \emph{not} the intended formulation; even though it was accurately transcribed from a paper \cite{Erd95} of Erd\H{o}s, \emph{Erd\H{o}s's own formulation was itself flawed} there. However, it was accurately transcribed by Erd\H{o}s in 
\cite{EHS82} and \cite{Erd95d}. 

We remark that a similar issue occurred with Erd\H{o}s-124 (before we embarked on this effort), but in that case the flaw in the formulation was more obvious because the solution was a trivial reduction to the literature. For transparency, we document below the autonomous solutions to Erd\H{o}s-75 and its strengthened version as they were originally listed on \href{https://www.erdosproblems.com/}{\ttfamily ErdosProblems.com}. 

\subsection{Erd\H{o}s-75}

\begin{problem}{Erd\H{o}s-75 \cite{EHS82, Erd95, Erd95d}}
    Is there a graph of chromatic number $\aleph_1$ such that for all $\epsilon > 0$ if $n$ is sufficiently large and $H$ is a subgraph on $n$ vertices then $H$ contains an independent set of size $> n^{1-\epsilon}$?
\end{problem}

\begin{remark}
The solution to Erd\H{o}s-75 below is an immediate reduction to work of Lambie-Hanson \cite{LH20}, which is already known to experts. We note that {\Aletheia}'s solution is somewhat overkill; any $f(x)$ whose inverse is at most $x^{o(1)}$ would suffice here, i.e.,~there is no need to take the double exponential. For the purposes of faithfully reproducing its proof, we leave this unchanged, but this acts as a reminder that current AI-generated proofs can often be unnecessarily complicated compared to those of humans.
\end{remark}

\begin{solution}{Solution to Erd\H{o}s-75}

The answer is Yes.

\begin{theorem}
There exists a graph $G$ with chromatic number $\aleph_1$ such that for all $\epsilon > 0$, if $n$ is sufficiently large and $H$ is a subgraph of $G$ on $n$ vertices, then $H$ contains an independent set of size greater than $n^{1-\epsilon}$.
\end{theorem}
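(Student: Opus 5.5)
The plan is to reduce the statement directly to the work of Lambie-Hanson \cite{LH20}, as the remark above indicates. The relevant result, as I recall it, is the following. For every function $f:\mathbb{N}\to\mathbb{N}$ tending to infinity (however slowly), there is a graph $G$ of chromatic number $\aleph_1$ with the property that every finite subgraph $H$ on $n$ vertices has an independent set of size at least $n - f(n)$ roughly. Equivalently, $H$ can be made independent by deleting at most $f(n)$ vertices.

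This answers the Erd\H{o}s--Hajnal--Szemer\'edi question of how fast the "distance from bipartite/independent" must grow for uncountably chromatic graphs. First I will state the precise form of Lambie-Hanson's theorem to be used. I expect it to be phrased either in terms of deleting few vertices to obtain an independent set, or in terms of the finite subgraphs being $f$-close to independent. In either form it gives: for every $n$ and every $n$-vertex subgraph $H$, $\alpha(H)\ge n-f(n)$, or at least $\alpha(H)\ge n/f(n)$. Either version suffices.

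Next I fix a concrete slowly growing $f$. Following the model, I take $f$ to be an inverse double exponential, for instance $f(n)=\lceil\log\log(n+16)\rceil$. Any $f(n)=n^{o(1)}$ would do. Then I apply the theorem to obtain the graph $G$ with $\chi(G)=\aleph_1$.

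For the verification, fix $\epsilon>0$ and an $n$-vertex subgraph $H$. In the weaker form we get $\alpha(H)\ge n/f(n)$. Since $f(n)=n^{o(1)}$, we have $f(n)<n^{\epsilon}$ for $n$ sufficiently large depending only on $\epsilon$, giving $\alpha(H)>n^{1-\epsilon}$. In the stronger form, $n-f(n)>n^{1-\epsilon}$ is immediate for large $n$.

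The main obstacle is purely bibliographic: pinning down the exact hypotheses and conclusion of \cite{LH20}. In particular I need to confirm three things: that the chromatic number obtained is exactly $\aleph_1$ and not merely uncountable, that the result is a ZFC theorem and not a consistency result, and that the bound holds for all finite subgraphs rather than only induced ones. Since independence number is monotone under passing to induced subgraphs on the same vertex set, the induced case suffices anyway. If the cited result only gives chromatic number $>\aleph_0$, I would restrict to a subgraph of chromatic number exactly $\aleph_1$ to finish. The properties of finite subgraphs are inherited, so the bound survives this restriction.
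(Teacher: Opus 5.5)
Your strategy of reducing to Lambie-Hanson \cite{LH20} and then comparing with $n^{1-\epsilon}$ is the same as the paper's. However, the result you plan to cite is not what \cite{LH20} proves, so the key step is unsupported as written. The theorem, as the paper states it, is about \emph{chromatic numbers} of finite subgraphs: for every $f:\mathbb{N}\to\mathbb{N}$ there is a graph $G$ with $\chi(G)=\aleph_1$ such that, for every $k\ge 3$, every subgraph of $G$ with chromatic number at least $k$ has at least $f(k-3)$ vertices. It says nothing directly about deleting vertices or about independent sets.

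Your ``stronger form'' $\alpha(H)\ge n-f(n)$ with $f$ slowly growing is false for every graph of infinite chromatic number. Such a graph has an infinite matching: if a maximal matching $M$ were finite, every edge would meet $V(M)$, so $V(G)\setminus V(M)$ would be independent and $\chi(G)\le 2|M|+1$. Then $m$ disjoint edges from this matching form a subgraph with $n=2m$ vertices and $\alpha=n/2$. You seem to be conflating this with the Erd\H{o}s--Hajnal--Szemer\'edi questions about making $H$ \emph{bipartite} by few deletions. Your ``weaker form'' $\alpha(H)\ge n/g(n)$ with $g(n)=n^{o(1)}$ is true for a suitable $G$. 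But it is a consequence that has to be derived, not the cited statement.

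That derivation is exactly the content of the paper's proof, and it is the missing step in yours. Apply the theorem with a \emph{fast}-growing function, e.g.\ $f(x)=2^{2^x}$. Your choice of a slowly growing $f$ is inverted: the slowly growing function is what comes out, as a bound on $\chi(H)$. Any $n$-vertex $H\subseteq G$ with $\chi(H)=k\ge 3$ then satisfies $n\ge 2^{2^{k-3}}$, i.e.\ $\chi(H)\le\log_2\log_2 n+3$. The elementary bound $\alpha(H)\ge n/\chi(H)$ (take a largest colour class) then gives $\alpha(H)\ge n/(\log_2\log_2 n+3)>n^{1-\epsilon}$ for large $n$. If $\chi(H)\le 2$, simply use $\alpha(H)\ge n/2$. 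The inequality $\alpha\ge n/\chi$, which converts Lambie-Hanson's chromatic bound into an independence bound, never appears in your proposal.

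Separately, your fallback is not a routine move. You propose, if $G$ were only uncountably chromatic, to pass to a subgraph of chromatic number exactly $\aleph_1$. Whether a graph of chromatic number $\aleph_2$ must contain an $\aleph_1$-chromatic subgraph is a classical Erd\H{o}s--Hajnal problem with known consistent counterexamples. The fallback is unnecessary here, since the theorem as stated already gives $\chi(G)=\aleph_1$, but it should not be relied on.
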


\begin{proof}
The proof relies on a result by Chris Lambie-Hanson regarding the existence of uncountably chromatic graphs whose finite subgraphs have arbitrarily slowly growing chromatic numbers relative to their size.

\medskip
\noindent \textbf{Lambie-Hanson's theorem.}
The following theorem was proven in ZFC (Zermelo–Fraenkel set theory with Choice):

\begin{theorem}[Lambie-Hanson \cite{LH20}, 2020]
For every function $f: \mathbb{N} \to \mathbb{N}$, there exists a graph $G$ with chromatic number $\chi(G) = \aleph_1$ such that for every integer $k \ge 3$, every subgraph of $G$ with chromatic number at least $k$ has at least $f(k-3)$ vertices.
\end{theorem}


\medskip
\textbf{Construction of the graph.}
Let $f: \mathbb{N} \to \mathbb{N}$ be the double exponential function defined by:
$$ f(x) = 2^{2^x}. $$
Applying Lambie-Hanson's theorem with this specific function $f$, there exists a graph $G$ such that $\chi(G) = \aleph_1$, and for any subgraph $H \subseteq G$, if $\chi(H) = k \ge 3$, then $|V(H)| \ge 2^{2^{k-3}}$.

\medskip
\textbf{Analysis of finite subgraphs.}
Let $\epsilon > 0$ be an arbitrary positive constant. Consider any subgraph $H$ of $G$ with $n$ vertices, where $n$ is sufficiently large (specifically, large enough to satisfy the inequality derived below).

Let $k = \chi(H)$ be the chromatic number of $H$. If $k < 3$, then $\chi(H) \le 2$ and $\alpha(H) \ge n/2$, which is greater than $n^{1-\epsilon}$ for large $n$. Thus, we assume $k \ge 3$.
According to the construction of $G$, the number of vertices $n$ must satisfy:
$$ n \ge 2^{2^{k-3}}. $$
Taking logarithms base 2 twice yields:
$$ \log_2(\log_2 n) \ge k - 3, $$
which provides an upper bound for the chromatic number of $H$:
$$ \chi(H) \le \log_2(\log_2 n) + 3. $$

We recall the elementary inequality relating the independence number $\alpha(H)$ and chromatic number $\chi(H)$ for a graph with $n$ vertices:
$$ \alpha(H) \ge \frac{n}{\chi(H)}. $$
Substituting our upper bound for $\chi(H)$:
$$ \alpha(H) \ge \frac{n}{\log_2(\log_2 n) + 3}. $$

\medskip
\noindent \textbf{Verification of the Independent Set condition.}
We require that $\alpha(H) > n^{1-\epsilon}$ for all sufficiently large $n$. Comparing our lower bound for $\alpha(H)$ to the desired value:
$$ \frac{\alpha(H)}{n^{1-\epsilon}} \ge \frac{n / (\log_2(\log_2 n) + 3)}{n^{1-\epsilon}} = \frac{n^\epsilon}{\log_2(\log_2 n) + 3}. $$
Because $\epsilon$ is a fixed positive constant, the polynomial term $n^\epsilon$ grows strictly faster than the iterated logarithmic term as $n \to \infty$. Formally:
$$ \lim_{n \to \infty} \frac{n^\epsilon}{\log_2(\log_2 n) + 3} = \infty. $$
Thus, there exists an integer $N_\epsilon$ such that for all $n > N_\epsilon$,
$$ \frac{n^\epsilon}{\log_2(\log_2 n) + 3} > 1 \implies \alpha(H) > n^{1-\epsilon}. $$

This establishes that for the chosen graph $G$, for \emph{any} $\epsilon > 0$, every sufficiently large finite subgraph $H$ contains an independent set of the required size.
\end{proof}

\medskip
\noindent \textbf{Conclusion.}
Yes, such a graph exists. By utilizing Lambie-Hanson's construction in ZFC, we obtain a graph of chromatic number $\aleph_1$ where the chromatic number of finite subgraphs grows so slowly (at most double-logarithmically in vertex count) that the independence number $\alpha(H)$ asymptotically exceeds $n^{1-\epsilon}$ for every $\epsilon > 0$.

\medskip
\textbf{Reference}
\begin{enumerate}
    \item \cite{LH20} C. Lambie-Hanson, ``On the growth rate of chromatic numbers of finite subgraphs'', \textit{Advances in Mathematics}, vol.~369, Art.~107176, 2020.
\end{enumerate}
\end{solution}

  As explained on Bloom's website, \cite{Erd95} suggests a strengthened version where $H$ contains an independent set of size $ \gg n$. We pose this below. 
\begin{problem}{Erd\H{o}s-75-strengthened-mistranscribed}
    Is there a graph of chromatic number $\aleph_1$ such that if $n$ is a positive integer and $H$ is a subgraph on $n$ vertices then $H$ contains an independent set of size $\gg n$?
\end{problem}

We will shortly present the autonomous solution to this problem, which we reiterate is already essentially present in \cite{EHS82} (and not cited in the autonomous solution). However, we first note the \emph{correct} statement should also demand that the graph have $\aleph_1$ vertices. 

\begin{problem}{Erd\H{o}s-75-strengthened-correct}
    Is there a graph of chromatic number \textbf{and cardinality} $\aleph_1$ such that if $n$ is a positive integer and $H$ is a subgraph on $n$ vertices then $H$ contains an independent set of size $\gg n$?
\end{problem}

\begin{remark}
The following solution has been edited from the raw model output with the following three main changes. Firstly, the most mathematical change is that the model's solution did not assume the Continuum Hypothesis (CH), but made an invalid step in the part of the output that corresponds to Lemma \ref{prop: chi ge aleph1} below, which is repaired by imposing CH. Secondly, the model's output had an additional remark with a hallucinated reference to a result of Erd\H{o}s--Hajnal; however, this remark played no role in the proof. Thirdly, we have rearranged the argument in a way that we found more compact and easier to follow. 
\end{remark}

\begin{solution}{Solution to Erd\H{o}s-75-strengthened-mistranscribed} We answer the question affirmatively with the following result:
\begin{theorem} \label{thm: Erdos-75 with linear demand}
Assume the continuum hypothesis. Then there exists a graph $\Gamma$ with $\chi(\Gamma)=\aleph_1$ and such that for each $n \in \mathbb{Z}_{\geq 1}$ and any subgraph $H$ of $\Gamma$ on $n$ vertices, one has an independent subset of $H$ of cardinality at least $\frac{n}{4}$.
\end{theorem}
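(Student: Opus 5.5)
Under CH, I would build $\Gamma$ from the classical Erd\H{o}s--Hajnal-type construction (essentially the one in \cite{EHS82}): a graph on a set of size $2^{\aleph_0}=\aleph_1$ whose finite subgraphs are all sparse in a strong sense, while the whole graph is not countably colourable. Concretely, I would take $\Gamma$ to be the \emph{shift-type graph on infinite subsets of $\omega$}. The vertex set is the set of $\omega$-sequences, or more simply $[\omega]^\omega$, well-ordered in type $\omega_1$ using CH. Each vertex $x$ (indexed by $\alpha<\omega_1$) is joined to countably many earlier vertices, chosen so that the edge set forms an orientation in which every vertex has only a ``thin'' set of earlier neighbours. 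The two properties I would then verify separately are: (a) every finite subgraph $H$ on $n$ vertices has an independent set of size $\ge n/4$; (b) $\chi(\Gamma)\ge\aleph_1$. The upper bound $\chi(\Gamma)\le\aleph_1$ is automatic from $|\Gamma|=\aleph_1$.

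For (a), I would arrange the construction so that every finite subgraph is $3$-degenerate, or at least has average degree $<$ something small. A clean target is that every finite subgraph has an orientation with out-degree at most $1$ towards earlier vertices in the ordering, so that it is a forest-like graph and has chromatic number $\le 4$. I would aim for each vertex $\alpha$ to have, within any finite set, at most a bounded number $d$ of earlier neighbours, giving $(d+1)$-colourability and hence an independent set of size $\ge n/(d+1)$. To get exactly $n/4$, I would use $d=3$: every finite subgraph is $3$-degenerate, hence $4$-colourable, so $\alpha(H)\ge n/4$. The construction must therefore give each vertex's set of earlier neighbours a structure in which any finite family meets few of them. One option is to make the earlier neighbourhoods ``almost disjoint'' sets converging to $\alpha$; another is to use a ladder-system style graph on $\omega_1$ with edges from each limit $\alpha$ to a cofinal $\omega$-sequence in $\alpha$. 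Ladder-system graphs, however, are countably chromatic, which is why CH and a diagonalization are needed.

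For (b), which I expect to be the main obstacle (and where the model's original step was invalid without CH), I would use CH to enumerate in type $\omega_1$ all potential countable colourings, or rather all countable ``partial witnesses''. At stage $\alpha$, I would choose the earlier neighbours of the new vertex to defeat the $\alpha$-th colouring candidate. That is, I pick a colour class that, restricted to vertices below $\alpha$, is infinite and accumulates appropriately, and I join the new vertex to a sequence in that class, in a way that forces any extension of the colouring to be improper. The delicate point is reconciling this with the sparsity in (a). The neighbours chosen at stage $\alpha$ must be an $\omega$-sequence inside one colour class, and also compatible with the degeneracy bound. I would handle this by choosing them as a thin, almost-disjoint sequence, so finite subgraphs remain $3$-degenerate. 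Finally, a closing-off argument using an elementary submodel or club of $\alpha$'s would show that any colouring $c:\Gamma\to\omega$ agrees on an initial segment with some enumerated candidate, and is therefore defeated. Making this closing-off argument rigorous is the step I would expect to need the most care.
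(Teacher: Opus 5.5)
There is a genuine gap, and it is fatal to the plan: property (a) as you formulate it cannot hold together with (b). Suppose every finite subgraph of $\Gamma$ is $3$-degenerate. Then every finite subgraph is $4$-colourable, and by the de Bruijn--Erd\H{o}s compactness theorem $\Gamma$ itself is $4$-colourable. The same objection applies to every variant you list. ``At most $d$ earlier neighbours within any finite set'' gives $(d+1)$-colourability. ``Out-degree at most $1$'' even gives $\chi(\Gamma)\le 2$. ``Every finite subgraph has average degree below a fixed constant $k$'' hereditarily forces a vertex of degree $<k$, hence $k$-colourability. So no choice of thin or almost-disjoint neighbourhoods, and no diagonalization in (b), can make (a) and (b) hold together. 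Any $\Gamma$ satisfying the theorem must have finite subgraphs of unbounded chromatic number. Consequently the bound $\alpha(H)\ge n/4$ cannot come from $\alpha(H)\ge n/\chi(H)$. Your closing-off argument in (b) is also only gestured at, but this is moot.

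The missing idea is a graph whose finite subgraphs contain a large independent set for a structural reason unrelated to colouring. The paper uses the shift graph $\Gamma(D)$: the vertices are pairs $(x,y)$ with $x<y$ in an ordinal $D$, and $(x,y)$ is joined to $(y,z)$.

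\begin{itemize}
\item \emph{Independence bound.} Take a finite $H$ with coordinate set $S$, and a uniformly random $A\subseteq S$. The vertices $(x,y)$ of $H$ with $x\in A$, $y\notin A$ form an independent set of expected size $n/4$. Meanwhile, finite shift graphs have unbounded chromatic number, which is consistent with compactness.
\item \emph{Lower bound on $\chi$.} A countable colouring of $\Gamma(D)$ is a colouring of $[D]^2$. Once $|D|>2^{\aleph_0}$, Erd\H{o}s--Rado $(2^{\aleph_0})^+\to(\aleph_1)^2_{\aleph_0}$ gives $x<y<z$ with $(x,y),(y,z)$ of the same colour. So no countable colouring is proper.
\item \emph{Upper bound on $\chi$.} Take $|D|=2^{\aleph_1}$, which exceeds $2^{\aleph_0}$ under CH, and fix a bijection $\phi\colon D\to\{0,1\}^{\omega_1}$. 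Colour $(d_1,d_2)$ by the first coordinate where $\phi(d_1)$ and $\phi(d_2)$ differ, together with the value of $\phi(d_1)$ there. This gives $\chi\le\aleph_1$.
\end{itemize}

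Note that this $\Gamma$ has $2^{\aleph_1}$ vertices. The statement does not ask for $|\Gamma|=\aleph_1$. By insisting on that, you are attacking the harder version that the paper identifies as Erd\H{o}s's intended one and does not resolve. That route is also unavailable there, since shift graphs on at most $2^{\aleph_0}$ points are countably chromatic.
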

Let us begin by introducing some notation. For a totally ordered set $D$, we define a graph $\Gamma(D)$ having vertex set the pairs $(x,y)$ with $x<y$ elements of $D$, and linking precisely the elements of the form $(x,y),(y,z)$ with an edge. These graphs are often called \emph{shift graphs} in the literature.

We shall follow von Neumann's definition of ordinal numbers and identify an ordinal number $D$ with its set of predecessors. Let $X$ be an ordinal number. Let $D$ be an ordinal number such that $|D|=|\{0,1\}^X|$. Let $\phi:D \to \{0,1\}^X$ be any choice of a bijection witnessing this equality of cardinality. Observe that given such a $\phi$ and $d$ in $D$, we can view $\phi(d)$ as a function from $X$ to $\{0,1\}$. So for each $\delta$ in $X$ we can evaluate $\phi(d)(\delta)$. Let $d_1<d_2$ be two elements of $D$. Let 
$$c_{\phi}(1)(d_1,d_2):=\min\{\delta \in X: \phi(d_1)(\delta) \neq \phi(d_2)(\delta)\}.
$$
And now place
$$c_{\phi}(2)(d_1,d_2):=\phi(d_1)(c_{\phi}(1)(d_1,d_2)).
$$

We have the following first auxiliary lemma. 
\begin{lemma} \label{prop: chi le aleph1}
Let $X$ be an ordinal number. Let $D$ be an ordinal number such that $|D|=|\{0,1\}^X|$. Let $\phi:D \to \{0,1\}^X$ be any choice of a bijection witnessing this equality of cardinality. Then the function
$$c_{\phi}:=(c_{\phi}(1),c_{\phi}(2)): V(\Gamma(D)) \to X \times \{0,1\}
$$
induces a coloring of $\Gamma(D)$ with at most $|X \times \{0,1\}|$ colors. In particular, if $X$ is infinite then
$$\chi(\Gamma(D)) \leq |X|. 
$$
\begin{proof}
Let us show that $c_{\phi}$ is indeed a coloring of $\Gamma(D)$. Indeed observe that if $(d_1,d_2),(d_3,d_4)$ attain the same color, then we must have in particular a $\delta$ in $X$ such that $\phi(d_2)(\delta) \neq \phi(d_1)(\delta)=\phi(d_3)(\delta)$. This in particular forces $d_2 \neq d_3$. Hence, the pair $(d_1,d_2),(d_3,d_4)$ is not linked, establishing that the map is a coloring. Thus, the desired consequence $\chi(\Gamma(D)) \leq |X|$ follows now by definition of chromatic number and the fact that if $X$ is infinite then $|X|=|X \times \{0,1\}|$.  
\end{proof}
\end{lemma}
Our second auxiliary lemma is as follows. Recall that we are operating under the continuum hypothesis.  
\begin{lemma} \label{prop: chi ge aleph1}
Let $X$ be an ordinal number such that $|X|=|\{0,1\}^{\mathbb{Z}}|=\aleph_1$. Let $D$ be an ordinal number such that $|D|=|\{0,1\}^X|$. Then
$$\chi(\Gamma(D)) \geq \aleph_1. 
$$
\begin{proof}
The main tool is the infinitary generalization of Ramsey's theorem due to Erd\H{o}s and Rado. By the continuum hypothesis, $D$ is strictly larger than $2^{\aleph_0}=\aleph_1$. The Erd\H{o}s--Rado theorem thus implies that, if the edges of the complete graph $K_D$ are colored using $\aleph_0$ many colors, then there is a monochromatic clique of cardinality strictly exceeding $\aleph_0$.

Now, suppose that $\chi(\Gamma(D))<\aleph_1$, so that $\chi(\Gamma(D))\leq\aleph_0$. Let $c\colon \Gamma(D) \to \mathbb{Z}_{\geq 1}$ be such a coloring map. This coloring function induces a function on the edges of the complete graph $K_D$ on vertex set $D$. The Erd\H{o}s--Rado theorem shows now that $D$ contains a subset $C$ with $|C|>|\mathbb{Z}|$ in which all the edges have the same color. Let $x<y<z$ be three distinct elements of $C$. We have that $(x,y)$ and $(y,z)$ have the same color. However, $(x,y)$ and $(y,z)$ are adjacent in $\Gamma(D)$, so this violates the coloring property. This shows that no such coloring exists, yielding the desired conclusion. 
\end{proof}
\end{lemma}
Our third auxiliary lemma deals with the independent sets. 
\begin{lemma} \label{prop: indep set ge n/4}
Let $D$ be any totally ordered set. Let $n$ be in $\mathbb{Z}_{\geq 1}$. Let $H$ be a subgraph of $\Gamma(D)$ whose set of vertices has cardinality $n$. Then $H$ has an independent subset of cardinality at least $\frac{n}{4}$. 
\begin{proof}
Denote by $S$ the total set of coordinates of vertices in $H$. Observe that given a partition $S=A \cup (S-A)$ of $S$, the set $I(A):=\{(x,y) \in V(H): x \in A, y \in S-A\}$ is an independent set. Indeed $(x,y),(y,z)$ being in $I(A)$ would force $y$ to be both in $A$ and in $S-A$. But now pick $A$ uniformly at random as a subset of $S$ and compute
\begin{align*}
    \mathbb{E}[|I(A)|]
    &=\mathbb{E}[\sum_{v \in V(H)}1_{v \in I(A)}]
    =\sum_{v \in V(H)}\mathbb{E}[1_{v \in I(A)}]
    \\&=\sum_{(x,y) \in V(H)}\mathbb{P}[1_{(x,y) \in A \times (S-A)}]
    =\sum_{v \in V(H)}\frac{1}{2}\cdot \frac{1}{2}
    =\frac{n}{4}.
\end{align*}
Since for random $A$ the average value of $|I(A)|$ is $\frac{n}{4}$, there must be a choice of $A$ where $|I(A)|$ is at least $\frac{n}{4}$.     
\end{proof}
\end{lemma}

\begin{proof}[Proof of Theorem \ref{thm: Erdos-75 with linear demand}] Let $X$ be the first uncountable ordinal. We have, by the continuum hypothesis, that $|X|=\aleph_1=|\{0,1\}^{\mathbb{Z}}|$. Let $D$ be the first ordinal of size $|\{0,1\}^X|$. By Lemma \ref{prop: chi ge aleph1}, we know that $\chi(\Gamma(D)) \geq \aleph_1$. On the other hand Lemma \ref{prop: chi le aleph1} forces $\chi(\Gamma(D)) \leq \aleph_1$. Altogether, we have that
$$\chi(\Gamma(D))=\aleph_1.
$$
Finally Lemma \ref{prop: indep set ge n/4} shows that for each $n$ in $\mathbb{Z}_{\geq 1}$ and each subgraph $H$ of $\Gamma(D)$ with $n$ vertices, we must have an independent subset of $H$ of cardinality at least $\frac{n}{4}$. This is precisely the desired conclusion. 
\end{proof}
\end{solution}

\bibliographystyle{amsalpha}
\bibliography{Bibliography}

\end{document}